\documentclass{article}

\usepackage[preprint]{neurips_2026}

\usepackage[utf8]{inputenc} 
\usepackage[T1]{fontenc}    
\usepackage{hyperref}       
\usepackage{url}            
\usepackage{booktabs}       
\usepackage{amsfonts}       
\usepackage{nicefrac}       
\usepackage{microtype}      
\usepackage{xcolor}         

\usepackage{graphicx}
\usepackage{bm}
\usepackage{amsmath}
\usepackage{mathtools}
\usepackage{subcaption}
\usepackage{enumitem}
\usepackage{placeins} 

\usepackage{amssymb}
\usepackage{amsthm}
\theoremstyle{theorem}

\newtheorem{proposition}{Proposition}

\theoremstyle{definition}
\newtheorem{definition}{Definition}

\usepackage{algorithm}      
\usepackage{algpseudocodex}  

\newcommand{\norm}[1]{\left\lVert #1 \right\rVert}

\title{Parallel-in-Time Training of Recurrent Neural Networks for Dynamical Systems Reconstruction}

\author{
    Florian Hess\textsuperscript{1,2} \And
    Florian Götz\textsuperscript{1,3} \And
    Daniel Durstewitz\textsuperscript{1,2,4}\vspace{1.5em} \\
    \textsuperscript{1}Dept. of Theoretical Neuroscience, Central Institute of Mental Health,\\
    Mannheim, Germany \\
    \textsuperscript{2}Faculty of Physics and Astronomy, Heidelberg University, Germany \\
    \textsuperscript{3}Faculty of Mathematics and Computer Science, Heidelberg University, Germany \\
    \textsuperscript{4}Interdisciplinary Center for Scientific Computing (IWR), Heidelberg University, Germany
}

\begin{document}

\maketitle

\begin{abstract}
Reconstructing nonlinear dynamical systems (DS) from data (DSR) is a fundamental challenge in science and engineering, but it inherently relies on \textit{sequential} models. Recent breakthroughs for sequential models have produced algorithms that parallelize computation along sequence length $T$, achieving logarithmic time complexity, $\mathcal{O}(\log T)$. Since sequence lengths have been practically limited due to the linear runtime complexity $\mathcal{O}(T)$ of classical backpropagation through time, this opens new avenues for DSR. This paper studies two prominent classes of parallel-in-time algorithms for this task, both of which leverage \textit{parallel associative scans} as their core computational primitive. The first class comprises models with linear yet non-autonomous dynamics and a nonlinear readout, such as modern State Space Models (SSMs), while the second consists of general nonlinear models which can be parallelized using the DEER framework. We find that the linear training-time recurrence of the first class of models imposes limitations that often hinder learning of accurate nonlinear dynamics. To address this, we augment DEER with Generalized Teacher Forcing (GTF), a novel variant within the more general nonlinear framework that ensures stable and effective learning of nonlinear dynamics across arbitrary sequence lengths. Using GTF-DEER, we investigate the benefits of training on extremely long sequences ($T>10^4$) for DSR. Our results show that access to such long trajectories significantly improves DSR if the data features long time scales. This work establishes GTF-DEER as a robust tool for data-driven discovery and underscores the largely untapped potential of long-sequence learning in modeling complex DS.
\end{abstract}

\section{Introduction}
Understanding and predicting the behavior of complex nonlinear systems from neural circuits and climate dynamics to fluid flows and ecological networks is a central aim across the natural and engineering sciences \citep{BhallaIyengar1999Emergent, Kalnay2003AMDAP, tziperman-97, izhikevich_dynamical_2007}. A particularly ambitious goal is to \emph{reconstruct} the underlying dynamical system (DS) directly from observed time series, a problem known as dynamical systems reconstruction (DSR). Beyond short-term forecasting, DSR requires that the learned model faithfully reproduces the long-term statistical and geometric properties of the true system, such as attractor geometry, power spectra, and Lyapunov exponents. 

Central to the definition of DS is the flow operator, which provides a recursive rule of how the DS evolves in time \cite{strogatz2024nonlinear,katok_introduction_1995}. DSR methods that approximate it are inherently recursive~\citep{goring_domain_2024,mikhaeil_difficulty_2022}, such as RNNs, commonly trained by backpropagation through time (BPTT; \citep{werbos1990backpropagation}). While several well-documented pathologies of BPTT -- most notably exploding gradients under chaotic dynamics \citep{mikhaeil_difficulty_2022} -- can be successfully mitigated by control-theoretic training algorithms such as sparse and generalized teacher forcing (STF/GTF) \citep{mikhaeil_difficulty_2022, hess_generalized_2023}, the computational cost of BPTT remains fundamentally linear in the sequence length $T$, rendering training on problems with long intrinsic timescales prohibitively expensive. Indeed, DSR applications have historically been confined to modest sequence lengths to keep training tractable \citep{brenner_tractable_2022, hess_generalized_2023, vlachas2024learning}, leaving open the question of whether learning from substantially longer sequences offers any benefit for reconstruction quality.

Recent advances in parallel-in-time sequence modeling offer a path forward, and we examine two common paradigms with respect to their performance in DSR. Linear training-time recurrences with a nonlinear readout, as instantiated by modern SSMs, admit trivial parallelization via linear scans and avoid chaos-induced exploding gradients by construction \citep{mikhaeil_difficulty_2022, orvieto2023resurrecting, zucchet2024recurrent}. By exposing a \textit{duality} between these linear SSMs and nonlinear RNNs, we show, however, that this convenience comes at a cost: common diagonal parameterization of the linear recurrence used during training imposes structural limitations that often prevent the model from learning accurate nonlinear dynamics. Moreover, its training suffers from exposure bias, degrading autoregressive roll-outs at test-time \citep{bengio2015scheduled, ranzato2016sequence}. The second paradigm -- general nonlinear RNNs parallelized via DEER \citep{lim2024parallelizing} -- is in principle better suited to DSR, but naive application fails on chaotic data because the Jacobian products driving DEER's Newton updates diverge whenever the underlying dynamics exhibits positive Lyapunov exponents \citep{mikhaeil_difficulty_2022, gonzalez2025predictability}.

Our main methodological contribution is to resolve this tension by combining DEER with Generalized Teacher Forcing (GTF) \citep{hess_generalized_2023}. The resulting algorithm, \textbf{GTF-DEER}, inherits the long-sequence scalability of DEER while retaining GTF's chaos-taming properties: GTF turns the model into a stable DS during training such that GTF-DEER enjoys the average-case $\mathcal{O}((\log T)^2)$  scaling established in \cite{gonzalez2025predictability}, even when the underlying system is chaotic. Empirically, GTF-DEER delivers speedups of up to $870\times$ over sequential training while matching or improving reconstruction quality. Equipped with GTF-DEER, we then revisit the central empirical question: \emph{does training on much longer sequences actually improve DSR?} Leveraging the ability to train stably on trajectories of length $T > 10^4$, we find that access to long trajectories allows the DSR model to efficiently capture long time scales if present in the data, yielding substantial gains in long-term statistics, which an equivalent linear SSM based model cannot match. Taken together, our results establish GTF-DEER as a direct replacement of sequential training while highlighting long-sequence training as a largely untapped lever for DSR.

\section{Related work}
\paragraph{Dynamical systems reconstruction} Data-driven methods for DSR fall into two broad categories: The first approximates the \textit{vector field} that underlies the observed dynamics, assuming the process is governed by differential equations. Sparse Identification of Nonlinear Dynamics (SINDy) and its variants \citep{brunton_discovering_2016, brunton_data-driven_2019, champion_data-driven_2019, messenger_weak_2021, cortiella_sparse_2021} are particularly popular in the physical sciences and enjoy quick training through least-squares regression, but rely on pre-defined function libraries and struggle with noisy, non-stationary and partially observed empirical data. Neural ODE/PDE methods \citep{chen_neural_2018, karlsson_modelling_2019, alvarez_dynode_2020, ko_homotopy-based_2023, aka2025balanced} offer universal approximation capabilities and can be augmented with physical priors \citep{raissi_physics-informed_2019, li_physics-informed_2022}, but are difficult to train in practice. The second and larger class of models directly approximates the \textit{flow operator} through black-box universal approximators based on neural networks, including neural operators \citep{li2020fourier, lu_learning_2021}, Koopman operators \citep{lusch_deep_2018, otto_linearly-recurrent_2019, brunton_modern_2021, naiman_koopman_2021, azencot_forecasting_2020, wang_koopman_2022} and hybrids thereof, reservoir computers \citep{pathak_using_2017, pathak_model-free_2018, verzelli2021learn, platt2023constraining, patel_using_2023, gauthier_next_2021}, and RNNs trained through BPTT \citep{vlachas_data-driven_2018, vlachas2024learning, brenner_tractable_2022, hess_generalized_2023, rusch_long_2022, brenner_almost_2024}, often accompanied by specialized control techniques that ease optimization and address exploding gradients under chaos \citep{mikhaeil_difficulty_2022, brenner_tractable_2022, hess_generalized_2023, sagtekin2025error}. The latter approach in particular achieves SOTA performance on a wide-range of benchmark systems and performs well even on challenging empirical systems \citep{hess_generalized_2023,volkmann2024scalable, brenner_integrating_2024}, which has led to its adoption as a backbone in foundation models for DSR \citep{hemmer2025true}.

\paragraph{Efficient sequence modeling} A central challenge in sequence modeling is the stable capture and retrieval of long-term dependencies \citep{bengio_learning_1994, hochreiter_lstm_97, gu_efficiently_2022,
gu2024mamba,
orvieto2023resurrecting, zucchet2024recurrent}. One practical bottleneck when using autoregressive models such as SSMs or RNNs is the $\mathcal{O}(T)$ runtime of BPTT \citep{werbos1990backpropagation}. Recent developments in the field of sequence models address this issue by parallelizing the inherently sequential operation on parallel accelerators, such as GPUs or TPUs, using \textit{parallel associative scans} \citep{blelloch1990prefix, martin2018parallelizing, smith_simplified_2023}. Parallel scans evaluate \emph{linear} recurrences, often used as the core block of modern SSMs \citep{orvieto2023resurrecting, smith_simplified_2023, gu2024mamba}, in $\mathcal{O}(\log T)$ time. Furthermore, \citep{lim2024parallelizing} introduced the DEER framework which parallelizes general \textit{nonlinear} sequence models by reformulating the forward pass as a fixed-point iteration problem, where each iteration can be solved using a parallel scan. While the worst-case runtime complexity is $\mathcal{O}(T\log T)$ \citep{gonzalez2024towards}, the average-case observed in practice for models exhibiting contracting dynamics scales as $\mathcal{O}((\log T)^2)$ \cite{gonzalez2025predictability}, sparking a renaissance for nonlinear RNNs in long-term sequence modeling. Although DEER demonstrates promising performance in various ML problems, its applicability to the field of DSR remains impractical due to guaranteed worst-case scaling when evaluating chaotic RNNs.

\section{Theoretical background} 
\subsection{Dynamical systems reconstruction with autoregressive models}
\label{subsec:autoregressive_dsr}
Given observed time series data $\bm{X} \in \mathbb{R}^{T_{\mathrm{obs}}\times N}$ originating from some underlying physical process, DSR seeks to learn a generative model that is able to both perform accurate short-term predictions and reproduce the long-term behavior of the observed system. To tackle this task, we consider parameterized state space models of the form
\begin{equation}\label{eq:general_stsp_model}
    \bm{z}_t = F_{\boldsymbol{\theta}}(\bm{z}_{t-1}, \bm{x}_{t-1}, \bm{s}_t), \qquad
    \bm{\hat{x}}_t = G_{\boldsymbol{\psi}}(\bm{z}_t),
\end{equation}
where $\boldsymbol{\theta}$ and $\boldsymbol{\psi}$ denote parameter vectors, $\bm{z}_t$ is an $M$-dimensional state vector, $\bm{\hat{x}}_t$ are $N$-dimensional predicted observations, $\bm{x}_{t-1}$ is an optional teacher signal used during training, and $\bm{s}_t$ are $K$-dimensional optional external inputs. $F_{\boldsymbol{\theta}}$ is a discrete-time universal DS modeling a latent process which is coupled to the observations (data) through $G_{\boldsymbol{\psi}}$. The aim of training is to learn $\{\bm{\theta}, \bm{\psi}\}$ such that the SSM approximates the flow operator of the underlying DS, and after training we have 
\begin{equation}
    \bm{x}_t \approx G_{\bm{\psi}}(F_{\bm{\theta}}(\dots F_{\bm{\theta}}(F_{\bm{\theta}}(\bm{z}_0, \bm{s}_1), \bm{s}_2)\dots,\bm{s}_t)) \eqqcolon G_{\bm{\psi}}(F_{\bm{\theta}}^{\circ t}(\bm{z}_0, \bm{s}_{1:t})). 
\end{equation}
In this work, we will investigate two general parameterizations of Eq.~\eqref{eq:general_stsp_model}. 

\subsubsection{Linear training-time recurrences}
In one setting, $F_{\bm{\theta}}$ is strictly \textit{linear}, which shifts the burden of capturing nonlinearities in the data to a nonlinear observation function $G_{\boldsymbol{\psi}}$ which feeds back into the system at test time, 
\begin{equation}\label{eq:shplrnn_ssm}
    \bm{z}_t = \bm{A}\bm{z}_{t-1} + \bm{U}\bm{x}_{t-1} + \bm{C}\bm{s}_t + \bm{h}, \qquad
    \hat{\bm{x}}_t = \bm{B}\phi(\bm{V}\bm{z}_t+\bm{b}),  
\end{equation}
where $\bm{A} \in \mathbb{R}^{M\times M}$ (often chosen to be diagonal), $\bm{U} \in \mathbb{R}^{M\times N}$, $\bm{C} \in \mathbb{R}^{M \times K}$, $\bm{h} \in \mathbb{R}^M$, $\bm{B} \in \mathbb{R}^{N \times L}$, $\bm{V}\in\mathbb{R}^{L \times M}$, $\bm{b}\in\mathbb{R}^L$ and $\phi$ is a nonlinear function such as the $\mathrm{ReLU}(\bm{z}) = \mathrm{max}(0, \bm{z})$. This is essentially the architectural setup of modern SSMs, where linear recurrences are followed by non-linear, point-wise sequence transformations \citep{ orvieto2023resurrecting, orvieto2024universality, smith_simplified_2023, gu2024mamba}, often implemented by MLPs. In our case, the MLP is a simple one-hidden-layer neural network with hidden layer size $L$. In the following, we will refer to models defined by Eq.~\eqref{eq:shplrnn_ssm} as `LSSM'. A crucial insight is that while the recurrence in Eq.~\eqref{eq:shplrnn_ssm} constitutes a linear, \textit{non-autonomous} DS during training, the model can produce nonlinear dynamics during evaluation by replacing the teacher signals $\bm{x}_{t-1}$ with predictions $\hat{\bm{x}}_{t-1}$. Indeed, during trajectory generation \textit{after} training, the recursion of Eq.~\eqref{eq:shplrnn_ssm} turns into
\begin{equation}\label{eq:shplrnn_ssm_test_time}
    \bm{z}_t = \bm{A}\bm{z}_{t-1} + \overline{\bm{W}}\phi(\bm{V}\bm{z}_{t-1} + \bm{b}) + \bm{C}\bm{s}_t + \bm{h}
\end{equation}
where $\bm{U}\bm{B} =: \overline{\bm{W}} \in\mathbb{R}^{M\times L}$ with $\operatorname{rank}(\overline{\bm{W}}) \leq \min(N, M, L)$. This allows SSMs to exhibit inherently nonlinear traits such as chaotic dynamics and multistability during autoregressive generation. While the major feature of Eq.~\eqref{eq:shplrnn_ssm} is the fact that the forward pass can be calculated in logarithmic time using parallel scan, another insight is that linear recurrences do not suffer from chaos-induced exploding gradients \citep{mikhaeil_difficulty_2022}, and hence remedy associated training instabilities by design \citep{mikhaeil_difficulty_2022, orvieto2023resurrecting, zucchet2024recurrent}; see Appx. \ref{appx:sec:no_exploding_grads_for_ssms} for details. Training of LSSMs makes use of a variant of teacher forcing originating from the sequence modeling field, where ground-truth data from the previous time step $\bm{x}_{t-1}$ is fed through a dedicated input layer ($\bm{U}$) \citep{bengio2015scheduled, Goodfellow-et-al-2016}; a generic training algorithm is provided in Alg. \ref{alg:LSSM}. This type of training suffers from \textit{exposure bias}, where the absence of forcing signals during evaluation (Eq.~\eqref{eq:shplrnn_ssm_test_time}) significantly degrades autoregressive roll-outs \citep{ranzato2016sequence}. While methods such as scheduled sampling address this issue \citep{bengio2015scheduled, vlachas2024learning}, they destroy linearity and therefore efficient parallelization of the recurrence in Eq.~\eqref{eq:shplrnn_ssm} during training; see Appx.~\ref{appx:sec:scheduled_sampling_breaks_scan} for details.

\subsubsection{Non-linear training-time recurrences}\label{subsec:nonlinear_recurrences}
To compare training algorithms, we will consider a second parameterization of Eq.~\eqref{eq:general_stsp_model}, which uses a nonlinear RNN as the latent model, coupled to the observations through a simple linear mapping:
\begin{equation}\label{eq:shplrnn_nonlinear}
   \bm{z}_t = \bm{A}\bm{z}_{t-1} + \bm{W}\phi(\bm{V}\bm{z}_{t-1}+\bm{b}) + \bm{C}\bm{s}_t +\bm{h},\qquad \hat{\bm{x}}_t = \bm{B}\bm{z}_t.
\end{equation}

With $\phi(\cdot)= \mathrm{ReLU}(\cdot)$, the latent model is known as a shallow piecewise-linear RNN (shPLRNN; \citep{hess_generalized_2023}), an established RNN architecture designed for DSR. Indeed, up to the constraint of a low-rank connectivity matrix, which can be manually enforced in the latter parameterization, the recurrences of Eqs.~\eqref{eq:shplrnn_ssm_test_time} and \eqref{eq:shplrnn_nonlinear} are \textit{mathematically equivalent}. This is important as it allows us to \textit{directly compare training algorithms and parameterizations, unconfounded} by any architectural differences at test time, studying optimized models with parameters $\bm{\theta}_{\mathrm{RNN}} \cup \bm{\psi}_{\mathrm{RNN}} = \{\bm{A}, \bm{W}, \bm{V}, \bm{b}, \bm{C}, \bm{h}, \bm{B}\}$ and $\bm{\theta}_{\mathrm{LSSM}} \cup \bm{\psi}_{\mathrm{LSSM}}=\{\bm{A}, \overline{\bm{W}}, \bm{V}, \bm{b}, \bm{C}, \bm{h}, \bm{B}\}$. All results established below hold in the presence of external inputs $\bm{s}$, but we will drop them from notation for brevity.

Due to the nonlinearity of the latent model in Eq.~\eqref{eq:shplrnn_nonlinear}, the forward pass is evaluated sequentially and cannot be naively parallelized. Hence, a common method to train for DSR is to generate trajectories of length $T$ and then compare them to the data $\bm{x}_{1:T}$. To avoid training instabilities caused by, for example, exploding gradients, training is accompanied by control-theoretic forcing methods such as Generalized Teacher Forcing (GTF; \citep{doya_bifurcations_1992, hess_generalized_2023}). GTF alters the forward pass of the model during training by linearly interpolating between the latent state and a teacher signal before application of the sequence model:
\begin{equation}\label{eq:GTF}
\begin{aligned}
    \bm{z}_{t} = F_{\bm{\theta}} \circ \delta_\alpha(\bm{z}_{t-1}, \overline{\bm{z}}_{t-1}) = F_{\bm{\theta}}((1-\alpha)\bm{z}_{t-1} + \alpha\overline{\bm{z}}_{t-1})
    = F_{\bm{\theta}}(\tilde{\bm{z}}_{t-1}),
\end{aligned}
\end{equation}
where $0 \le \alpha \le 1$ is the forcing strength and $\overline{\bm{z}}$ denotes the teacher signal which is generally computed by inversion of the observation model or output layer, i.e. $\overline{\bm{z}}_t=G_{\boldsymbol{\psi}}^{-1}(\bm{x}_t)$, and $\tilde{\bm{z}}$ is the forced state. During training, this leads to decomposition of the Jacobian as
\begin{equation}\label{eq:jacobian_decomp}
    \bm{J}_{F\circ\delta}(\bm{z}_{t-1}) =  (1-\alpha)\bm{J}_F(\tilde{\bm{z}}_{t-1}),
\end{equation}
where $\bm{J}_{F}(\bm{z}) \coloneqq \frac{\partial{F(\bm{z})}}{\partial{ \bm{z}}}$. The forcing parameter $\alpha \in [0,1]$ controls the norm of the Jacobians during BPTT and can be optimally and adaptively adjusted in training to mitigate exploding gradients \citep{hess_generalized_2023}. 

For linear observation models (Eq.~\eqref{eq:shplrnn_nonlinear}) and the common case where the RNN has more units than there are observed dynamical variables ($M > N$), \citep{sagtekin2025error} introduced a correction to GTF \eqref{eq:GTF} which includes the row-space projector of $\bm{B}$ in the forcing equation:
\begin{equation}\label{eq:error_forcing}
    \delta_{\alpha,\bm{B}}(\bm{z}_t, \bm{x}_t) = (\bm{I} - \alpha \bm{B}^+\bm{B})\bm{z}_t + \alpha \bm{B}^+\bm{x}_t,
\end{equation}
where $\bm{B}^+$ denotes the pseudo-inverse of $\bm{B}$, such that Eq.~\eqref{eq:jacobian_decomp} becomes
\begin{equation}\label{eq:error_forcing_decomp}
    \bm{J}_{F\circ\delta_{\alpha, \bm{B}}}(\bm{z}_{t-1}) =  \bm{J}_F(\tilde{\bm{z}}_{t-1})\bm{P}_\alpha,
\end{equation}
where $\bm{P}_\alpha := \bm{I} - \alpha \bm{B}^+\bm{B}$. While this correction does not preserve the mitigation of exploding gradients in general, it dampens Jacobian singular values along directions corresponding to expansion in observation space. Indeed, we find empirically that Jacobian damping still holds in practice in this case, which we attribute to mixing of directions through the Jacobian product. Note that for $M \leq N$ and assuming full column-rank, $\bm{B}^+\bm{B} = \bm{I}$ and vanilla GTF is recovered. For a detailed overview of the training protocol under GTF, see Alg.~\ref{alg:ivp}.

\subsubsection{Parallel-in-time algorithms for nonlinear sequence models}
Training nonlinear flow operators on long sequences sequentially is prohibitive due to the linear $\mathcal{O}(T)$ scaling of BPTT in sequence length. To make training feasible, we will make use of the recently proposed DEER algorithm \citep{lim2024parallelizing, gonzalez2024towards}. Given an initial condition $\bm{z}_0$, let $\bm{z}_{1:T}$ be a series of candidate latent states and define the residual vector $\bm{r}(\bm{z}_{1:T}) := [\bm{z}_1 - F(\bm{z}_{0}), \dots,\bm{z}_T - F(\bm{z}_{T-1})] \in \mathbb{R}^{MT}$. The fixed point $\bm{z}_{1:T}^\ast=F(\bm{z}_{0:T-1}^\ast)$ is the only solution with zero residual, i.e. $\bm{r}(\bm{z}_{1:T}^\ast) = \bm{0}$. Conventionally, the forward pass is generated sequentially by iterating $F$ starting from $\bm{z}_0$. DEER turns the roll-out of the map $F$ into a root-finding problem for the residual, which is solved using Newton's method. 
Starting from an initial guess $\bm{z}_{1:T}^{(0)}$, where the superscript indicates the current Newton iteration, the true trace $\bm{z}_{1:T}^\ast$ is approximated by iteratively solving the update equation 
\begin{equation}\label{eq:DEER_vector_update}
    \bm{z}_{1:T}^{(i+1)} = \bm{z}_{1:T}^{(i)} - \left(\bm{J}_{\bm{r}}(\bm{z}_{1:T})\right)^{-1} \bm{r}\left(\bm{z}_{1:T}^{(i)}\right),
\end{equation}
where $\bm{J}_{\bm{r}} := \frac{\partial \bm{r}}{\partial \bm{z}_{1:T}}$. Inverting the Jacobian matrix explicitly is infeasible for large $MT$. Instead, it is more practical to multiply both sides of the equation by the Jacobian and exploit its block-bidiagonal structure (cf.\ Eq.~\eqref{eq:blockdiag_jacobian} in Appx.~\ref{appx:backward_pass}) to obtain a recursive formula for $\Delta\bm{z}_t^{(i+1)} := \bm{z}_t^{(i+1)} - \bm{z}_t^{(i)}$:
\begin{equation}\label{eq:DEER_update}
    \Delta \bm{z}_t^{(i+1)} = \bm{J}_{F}\left(\bm{z}_{t-1}^{(i)}\right) \Delta \bm{z}_{t-1}^{(i+1)} - \left[\bm{r}(\bm{z}_{1:T}^{(i)})\right]_t
\end{equation}
Eq.~\eqref{eq:DEER_update} is linear in $\Delta \bm{z}$ and hence each Newton iteration can be solved in $\mathcal{O}(\log T)$ time by a parallel associative scan. Due to its usage of the full Jacobian and hence full matrix products, increasing the dimensionality of the state of the sequence model can become a bottleneck. To address this, \cite{gonzalez2024towards} suggested the use of quasi-Newton methods which replace the full Jacobian in Eq.~\eqref{eq:DEER_update} by its diagonal $\mathrm{diag}(\bm{J}_F)\in\mathbb{R}^M$, reducing the \emph{work} $\mathcal{W}$ of each Newton iteration from $\mathcal{W}_{\mathrm{full}}=\mathcal{O}(M^3T)$ to $\mathcal{W}_{\mathrm{diag}}=\mathcal{O}(MT)$.

While DEER and its variants can be directly used to train sequence models on data from nonlinear DS, the methods suffer from Jacobian divergence when tasked to generate orbits from systems with (on average) unstable dynamics, i.e. where $\lVert\bm{J}_{F}(\bm{z}_t) \rVert > 1$ for most $t$ \citep{gonzalez2025predictability}. This divergence is inevitable when training on chaotic systems \cite{mikhaeil_difficulty_2022}. Since most if not all natural complex DS are chaotic \citep{Sivakumar2004ChaosGeophysics, durstewitz_dynamical_2007, GovindanECGchaos, Turchin92Ecology, FieldKorosNoyes1972}, this needs to be amended. 

\section{Methods}
\subsection{DSR from long sequences}
To enable stable training from long sequences of chaotic dynamics, we propose to combine GTF and DEER, which we coin GTF-DEER. By simply replacing the RNN map $F_{\bm{\theta}}$ with the teacher-forced variant \eqref{eq:GTF}, the residual becomes the vector with entries
\begin{equation}\label{eq:GTF_residual}
    \left[\bm{r}_\alpha(\bm{z}_{1:T}, \bm{x}_{1:T})\right]_t = \bm{z}_t - F_{\bm{\theta}}\left(\delta_{\alpha,\bm{B}}(\bm{z}_{t-1}, \bm{x}_{t-1})\right)
\end{equation} 
for $t = 2\dots T$ and  $\left[\bm{r}_\alpha(\bm{z}_{1:T}, \bm{x}_{1:T})\right]_1 = \bm{z}_1-F_{\bm{\theta}}(\bm{z}_0)$ since $\bm{z}_0$ is given.
Hence the linear recursion uses the decomposed Jacobians \eqref{eq:error_forcing_decomp}
\begin{equation}\label{eq:GTF_DEER_update}
    \Delta \bm{z}_t^{(i+1)} 
    = \left[\bm{J}_F(\tilde{\bm{z}}_{t-1}^{(i)})\bm{P}_\alpha\right] \ \Delta\bm{z}_{t-1}^{(i+1)} - \left[\bm{r}_\alpha\left(\bm{z}_{1:T}^{(i)}, \bm{x}_{1:T}\right)\right]_t
\end{equation}
Similar to sequential GTF, Jacobian divergence caused by chaotic dynamics is tamed by $\bm{P}_\alpha$ which for $M \leq N$ strictly turns a DS with diverging state space directions into a contracting one, improving DEER convergence \citep{gonzalez2025predictability}. To demonstrate this formally, we need the following definitions.
\begin{definition}\label{def:predictability}
The \emph{largest Lyapunov exponent} (LLE) of an orbit of the iterated map $F$ starting in $\bm{z}_0$ is
\begin{equation}
    \textstyle
    \lambda(\bm{z}_{0})=\lim_{T \to \infty} \frac{1}{T}
\log \left\lVert{\bm{J}_{T}\bm{J}_{T-1}\dots\bm{J}_1}\right\rVert_2,
\end{equation}
where $\bm{J}_t := \bm{J}(\bm{z}_{t-1}):=\frac{\partial F(\bm{z}_{t-1})}{\partial \bm{z}_{t-1}}$.

We say the iterated map $F$ is \emph{divergent} in $\bm{z}_0$ if $\lambda(\bm{z}_{0})>0$. When $\lambda(\bm{z}_{0}) <0$, $F$ is \emph{contracting}.
\end{definition}

\begin{proposition}\label{prop:gtf_deer_convergence}
Let an SSM be given by a latent model $F: \mathbb{R}^M \to \mathbb{R}^M$ and a linear observation model $G: \mathbb{R}^M \to \mathbb{R}^N, \ \bm{z}\mapsto\bm{B}\bm{z}$ where $M \leq N$ and $\bm{B}$ is assumed to have full rank. Let the
supremal Jacobian norm satisfy $\tilde{\sigma}_{\max} := \sup \left\{\lVert\bm{J}(\bm{z})\rVert_2 = \sigma_{\mathrm{max}}(\bm{J}(\bm{z})) \mid \bm{z} \in \mathbb{R}^M \right\} > 1$.
Choose the GTF forcing strength
$\alpha \in (\alpha^*,\; 1]$, where $\alpha^* := 1 - \tilde{\sigma}_{\max}^{-1}$ \citep{hess_generalized_2023},
and define the effective contraction rate
\begin{equation}\label{eq:rho}
    \rho := (1 - \alpha)\, \tilde{\sigma}_{\max} < 1.
\end{equation}
Then the forced system $F\circ\delta_{\alpha, \bm{B}}$ is globally contracting with rate~$\rho$.
    Its LLE satisfies
    \begin{equation}\label{eq:lle_bound}
        \lambda_{\mathrm{GTF}} \leq \log \rho < 0.
    \end{equation}
\end{proposition}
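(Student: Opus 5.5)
The argument reduces the forced map to vanilla GTF and then bounds Jacobian products uniformly along any orbit.

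\textbf{Step 1: reduction to vanilla GTF.} Since $M \leq N$ and $\bm{B}\in\mathbb{R}^{N\times M}$ has full (column) rank, $\bm{B}^+\bm{B} = \bm{I}_M$. Hence $\bm{P}_\alpha = (1-\alpha)\bm{I}$ and
\[
\delta_{\alpha,\bm{B}}(\bm{z},\bm{x}) = (1-\alpha)\bm{z} + \alpha\bm{B}^+\bm{x},
\]
which is exactly the vanilla GTF interpolation \eqref{eq:GTF} with teacher $\overline{\bm{z}} = \bm{B}^+\bm{x}$. By the chain rule, \eqref{eq:error_forcing_decomp} becomes \eqref{eq:jacobian_decomp}:
\[
\bm{J}_{F\circ\delta}(\bm{z}_{t-1}) = (1-\alpha)\,\bm{J}_F(\tilde{\bm{z}}_{t-1}).
\]

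\textbf{Step 2: uniform one-step bound.} For every state, $\lVert\bm{J}_{F\circ\delta}\rVert_2 \le (1-\alpha)\tilde{\sigma}_{\max} = \rho$, regardless of where $\tilde{\bm{z}}$ lands. The condition $\alpha > \alpha^* = 1-\tilde{\sigma}_{\max}^{-1}$ gives $1-\alpha < \tilde{\sigma}_{\max}^{-1}$, so $\rho<1$. If $\alpha = 1$ then $\rho = 0$, and the map is constant in $\bm{z}$.

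\textbf{Step 3: global contraction.} I take the convex segment between any $\bm{z}, \bm{z}'$ and apply the mean value inequality $\lVert F\circ\delta(\bm{z}) - F\circ\delta(\bm{z}')\rVert_2 \le \sup\lVert \bm{J}\rVert_2\,\lVert \bm{z}-\bm{z}'\rVert_2 \le \rho\lVert\bm{z}-\bm{z}'\rVert_2$. If $F$ is only piecewise differentiable, as with ReLU, I apply the inequality piecewise along the segment, reading $\tilde\sigma_{\max}$ as a supremum over the differentiable regions. This is the one technical caveat to address, since $F$ is Lipschitz and differentiable almost everywhere along the segment.

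Although the forced system is non-autonomous, because $\bm{x}_{t-1}$ varies with $t$, the bound is uniform in $t$ and in $\bm{x}$. Therefore every forced map $\bm{z}\mapsto F(\delta_{\alpha,\bm{B}}(\bm{z},\bm{x}_{t-1}))$ is a $\rho$-contraction.

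\textbf{Step 4: LLE bound.} By submultiplicativity of the operator norm,
\[
\lVert \bm{J}_T\cdots\bm{J}_1\rVert_2 \le \prod_t \lVert\bm{J}_t\rVert_2 \le \rho^T,
\]
so $\frac1T\log\lVert\cdot\rVert \le \log\rho$ for every $T$. Taking the limit in Definition~\ref{def:predictability}, or the limsup if the limit is not known to exist, yields $\lambda_{\mathrm{GTF}} \le \log\rho < 0$, with $\log 0 = -\infty$ when $\alpha = 1$. The bound holds for every initial condition $\bm{z}_0$ and every teacher sequence.

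There is no real obstacle here. The only delicate points are justifying $\bm{B}^+\bm{B}=\bm{I}$, which requires full column rank, and handling non-differentiable points of ReLU in Step 3.
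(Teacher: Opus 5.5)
Your proof is correct and follows the paper's argument. With $\bm{B}^+\bm{B}=\bm{I}_M$ the forced Jacobian becomes $(1-\alpha)\bm{J}_F(\tilde{\bm{z}}_{t-1})$ with norm at most $\rho<1$, and submultiplicativity of the Jacobian product gives $\lambda_{\mathrm{GTF}}\le\log\rho$; your Step 3 (the mean-value inequality giving an actual $\rho$-Lipschitz contraction, with the piecewise treatment for ReLU) and your limsup and $\alpha=1$ remarks go slightly beyond the paper, which only proves contraction in the Lyapunov-exponent sense of Definition~\ref{def:predictability}.
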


\begin{proof}
See Appx. \ref{appx:gtf_deer_proof}.
\end{proof}

Proposition \ref{prop:gtf_deer_convergence} says that given a suitable $\alpha$, training a nonlinear sequence model using GTF-DEER leads to guaranteed convergence of the forward pass regardless of the dynamics that underlie the data, enabling stable and efficient parallel-in-time training for DSR. We validate this theoretical finding in Sect. \ref{subsec:gtf_deer_runtime_convergence} and investigate the effect of $M > N$ on GTF-DEER convergence empirically. Moreover, since we directly want to match forced model trajectories to targets given by the data, we can initialize the Newton iterations of DEER using the forcing signals, i.e. $\bm{z}_{1:T}^{(0)} = \overline{\bm{z}}_{1:T}$, which speeds up convergence over the typical initialization by zeros \citep{lim2024parallelizing, gonzalez2024towards}. We provide the training routine using GTF-DEER in Alg.~\ref{alg:gtf-deer}.

\subsection{Capturing long-term dependencies}\label{subsec:longterm_dependencies}
\paragraph{Regularization} To effectively capture long-term dependencies, sequence models must maintain stable flow of information over extended temporal horizons \citep{bengio_learning_1994, hochreiter_lstm_97, gu_efficiently_2022, zucchet2024recurrent}. Crucially, for maintaining connections among states across time we must not only avoid chaotic divergence, but also prevent error signals propagated backward from decaying too fast. In this work we therefore make use of Manifold Attractor Regularization (MAR; \citep{schmidt_identifying_2021}) to ameliorate diminishing gradients by equipping the sequence model with a latent subspace that adaptively captures slow timescales in the data. 
In Appx.~\ref{appx:regularizations}, we review the MAR contribution to the loss function and show how this regularization leads to stable error propagation by imposing a block-diagonal identity structure in the model Jacobian $\bm{J}_F$.

\paragraph{Latent state warm-up}
To provide the RNN with sufficient dynamical context during training, we employ latent state warm-up \citep{vlachas_backpropagation_2020, schiller2026tuning}. During training we sample sequences of length $T$, where the first $T_w$ steps are used for warm-up. The warm-up consists of forcing the sequence model with data $\bm{x}_{1:T_w}$, either by feeding them through $\bm{U}$ for the LSSM (Eq.~\eqref{eq:shplrnn_ssm}) or by GTF in the case of the nonlinear RNN (Eqs. \eqref{eq:GTF} and \eqref{eq:error_forcing}) with $\alpha=1$. This yields a latent state $\bm{z}_{T_w}$ which holds a compressed history of the warm-up signal, which is then used to predict the remaining $T-T_w$ time steps that contribute to the loss function given by the mean-squared-error $\textrm{MSE}(\bm{x}_{T_w+1:T}, \hat{\bm{x}}_{T_w+1:T})$. 

Crucially, the forward pass of the entire sequence of length $T$ \emph{including} the warm-up can be computed by a single call to GTF-DEER, enabling long contexts during training (cf. Alg.~\ref{alg:gtf-deer}).

\paragraph{Performance measures} To evaluate DSR performance we use an established long-term measure which computes the KL divergence between multivariate state distributions in observation space, $D_\mathrm{stsp}$ \citep{koppe_identifying_2019, hess_generalized_2023}. For partially observed systems, we first perform a delay embedding \citep{takens_detecting_1981, sauer_embedology_1991} of ground-truth and generated trajectories and compute $D_\mathrm{stsp}$ on these embeddings. We will make delay embeddings explicit in notation by writing the measure as $D_\mathrm{stsp}^{\mathrm{DE}}$. 
We also report the $128$-step root-mean-square-error (RMSE) as a measure of short- to medium-length prediction accuracy. For mathematical and experimental details on evaluation measures, see Appx. \ref{appx:measures}. 

\section{Results}
We will first evaluate the computational efficiency of GTF-DEER by performing runtime analyses. We will then investigate the benefit of long-sequence training for DSR through a set of ablation experiments and comparisons to state-of-the-art sequence models. Specifically, we 1) ask whether long sequences improve DSR and 2) reveal and explain the shortcomings of SSMs for DSR.

\subsection{Runtime convergence and performance of GTF-DEER}\label{subsec:gtf_deer_runtime_convergence}

\begin{figure}
    \centering
    \includegraphics[width=1.0\linewidth]{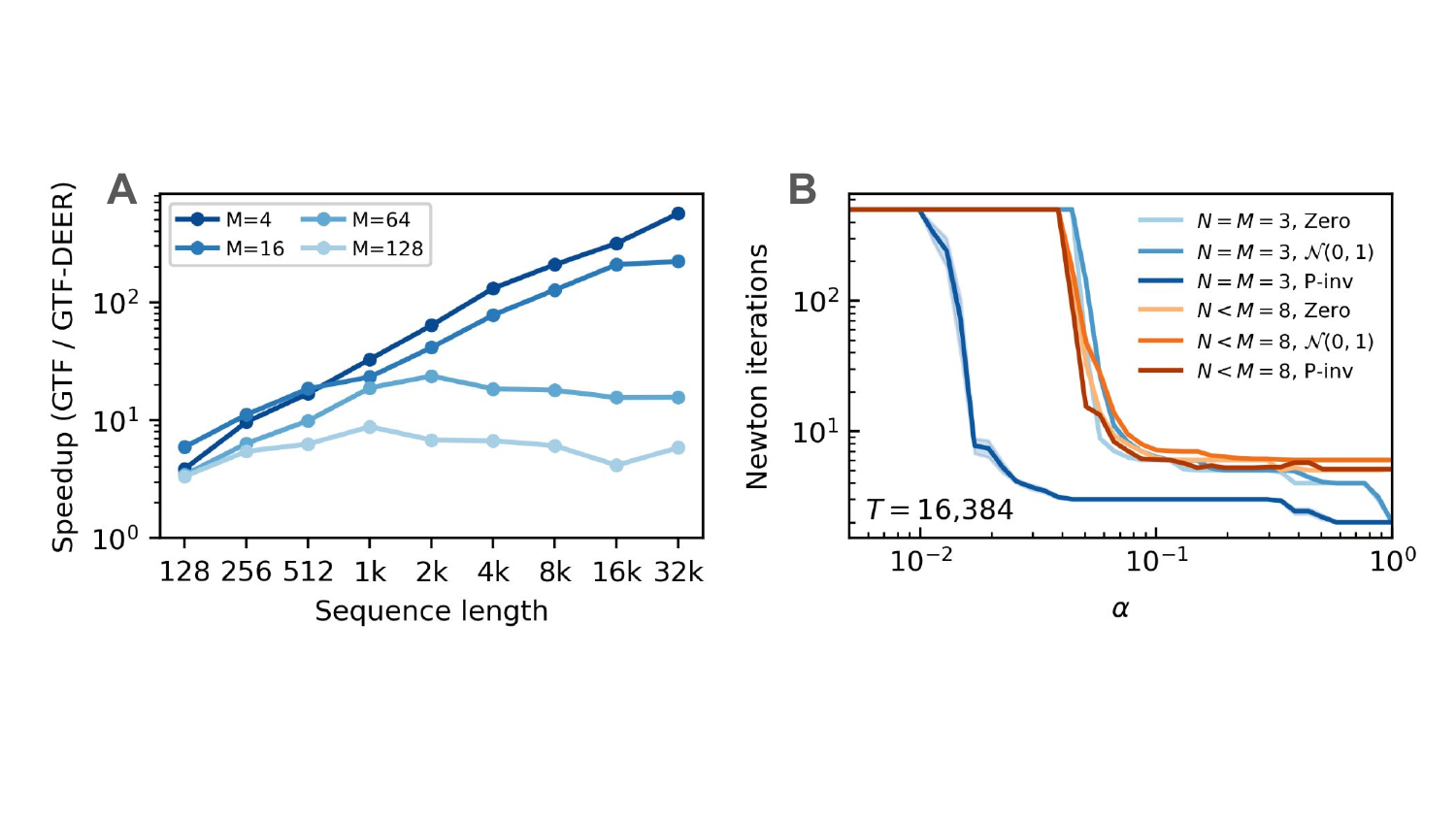}
    \caption{\textbf{A}: GTF-DEER scales favorably in sequence length, but the GPU quickly saturates for larger problems. All analyses were performed on an NVIDIA RTX 6000 Blackwell (96GB) GPU. Note the logarithmic scaling of the y-axis. \textbf{B}: The forcing parameter $\alpha$ controls Jacobian norms and hence reduces the number of Newton iterations needed for convergence of the GTF-DEER forward pass. Initializing the model trajectory using estimated latents through $\bm{z}^{(0)}_{1:T} = \bm{B}^+ \bm{x}_{1:T}$ (`P-inv') improves GTF-DEER convergence over naive initializations for $N=M$; however, the effect vanishes for $M > N$.}
    \label{fig:1}
\end{figure}

We evaluate the efficacy of GTF-DEER by performing runtime comparisons to the sequential baseline as well as performing ablations on several parameters of the GTF-DEER algorithm. Fig. \ref{fig:1}\textbf{A} shows median runtimes for combined forward+backward passes for both sequential and parallel evaluation using GTF-DEER while training to reconstruct dynamics of the chaotic Lorenz-63 attractor ($N=3$). While the sequential approach exhibits linear scaling throughout, GTF-DEER enables sublinear scaling with sequence length, leading to a speedup of up to $\mathbf{870\times}$ over sequential evaluation (see Fig. \ref{fig:1}\textbf{A} setting $M=4$, $T=32{,}768$). However, for large problems, i.e. large $B \times T \times M$, GPU memory bandwidth quickly saturates leading to near linear to constant scaling, 
a common bottleneck in parallel scan implementations on GPU devices \citep{merrill2016single, gu2024mamba}. Nevertheless, for all settings using GTF-DEER outperforms naive sequential training in terms of raw runtime.

Fig. \ref{fig:1}\textbf{B} shows the convergence of the DEER forward pass under increasing values of $\alpha \in [0, 1]$. For GTF-DEER, $\alpha$ does not only control Jacobian norms during the backward pass, addressing the exploding gradient problem in face of chaos \citep{mikhaeil_difficulty_2022, hess_generalized_2023}, but also controls convergence of the linear recurrence in Eq.~\eqref{eq:GTF_DEER_update}. If $\alpha$ is too small, the Jacobians will diverge in chaotic settings, and hence the forward pass becomes numerically unstable such that a large number of iterations is needed to converge. Increasing $\alpha$ pushes Jacobian norms below $1$ such that the forced model constitutes a stable DS, leading to convergence of GTF-DEER \citep{gonzalez2025predictability}. In fact, for sufficiently large $\alpha$, the forward pass even converges optimally in just $2$ Newton iterations for $M \leq N$!\footnote{That 2 instead of just 1 Newton iterations are required is due to the fact that one additional iteration is needed to verify that the previous iteration has indeed converged.}
For $M > N$, only a subspace of the latent space of the RNN is forced such that GTF-DEER needs more iterations to converge to the true latent dynamics. 

Finally, the different curves in Fig. \ref{fig:1}\textbf{B} show the benefit of readily available forcing targets: Initializing the latent trajectory with estimated forcing signals by pseudo-inversion of the observation model drastically reduces the number of Newton iterations for a given $\alpha$ and $M \leq N$. The benefit vanishes when moving to the overdetermined case of $M > N$. This is expected, as the RNN dynamics will veer off the estimated forcing signals as GTF is only applied to the zero-error manifold \citep{sagtekin2025error}. For more details on the experimental setup underlying Fig. \ref{fig:1} we refer to Appx. \ref{appx:details_fig1}.

\subsection{GTF-DEER ablations}\label{subsec:gtf_deer_ablations}
An important observation is that using diagonal approximation of the Jacobians in Eq.~\eqref{eq:GTF_DEER_update} for the forward pass (quasi-DEER, \citep{gonzalez2024towards}) affects gradients during the backward pass, as the same diagonal Jacobians will be used (see Appx.~\ref{appx:backward_pass} for more details). In Fig. \ref{fig:qdeer_ablation}, we test how Jacobian diagonalization affects GTF-DEER convergence and reconstruction quality. We show loss curves and Newton iterations of shPLRNNs ($M=5$, $L=50$) trained on the Lorenz-63 system under two settings: In the `FO' setting, the shPLRNN is tasked to reconstruct the dynamics from the fully observed system ($N=3$), while in the more challenging `PO' setting only the $x$-component is observed ($N=1$). We then consider training with diagonalized Jacobians (`quasi') as well as full Jacobians, i.e. standard GTF-DEER, while keeping all other training-related hyperparameters the same. We find that while full-Jacobian training handled both settings with minimal Newton iterations and provided good reconstruction quality (`PO': $D^{\mathrm{DE}}_\mathrm{stsp}=(7.5\pm6.2)\cdot10^{-3}$, `FO': $D_\mathrm{stsp}=(8.7\pm3.8)\cdot10^{-3}$), diagonal approximations needed $\approx 100\times$ more Newton iterations for the forward pass to converge, failed to yield convergent loss curves in the more challenging `PO' setting, and fell behind in reconstruction performance across both settings (`PO + quasi': $D^{\mathrm{DE}}_\mathrm{stsp}=(7 \pm 7)$ where $14/20$ runs  diverged, `FO + quasi': $D_\mathrm{stsp}=(4.4\pm3.5)\cdot10^{-2}$). This result highlights the role of the Jacobians during the backward pass: Since the diagonal approximations of the Jacobians are used to compute the gradient, they lose information on temporal mixing effects. This prevents the DSR model from learning a faithful embedding of the partially observed dynamics, which is also reflected in the erratic loss curve in Fig \ref{fig:qdeer_ablation}\textbf{A}. Thus, while full Jacobians in theory increase the time demand of each Newton step, this is more than compensated for by the manifold fewer iterations needed and the more stable training.
\begin{figure}[htb!]
    \centering
    \includegraphics[width=1.0\linewidth]{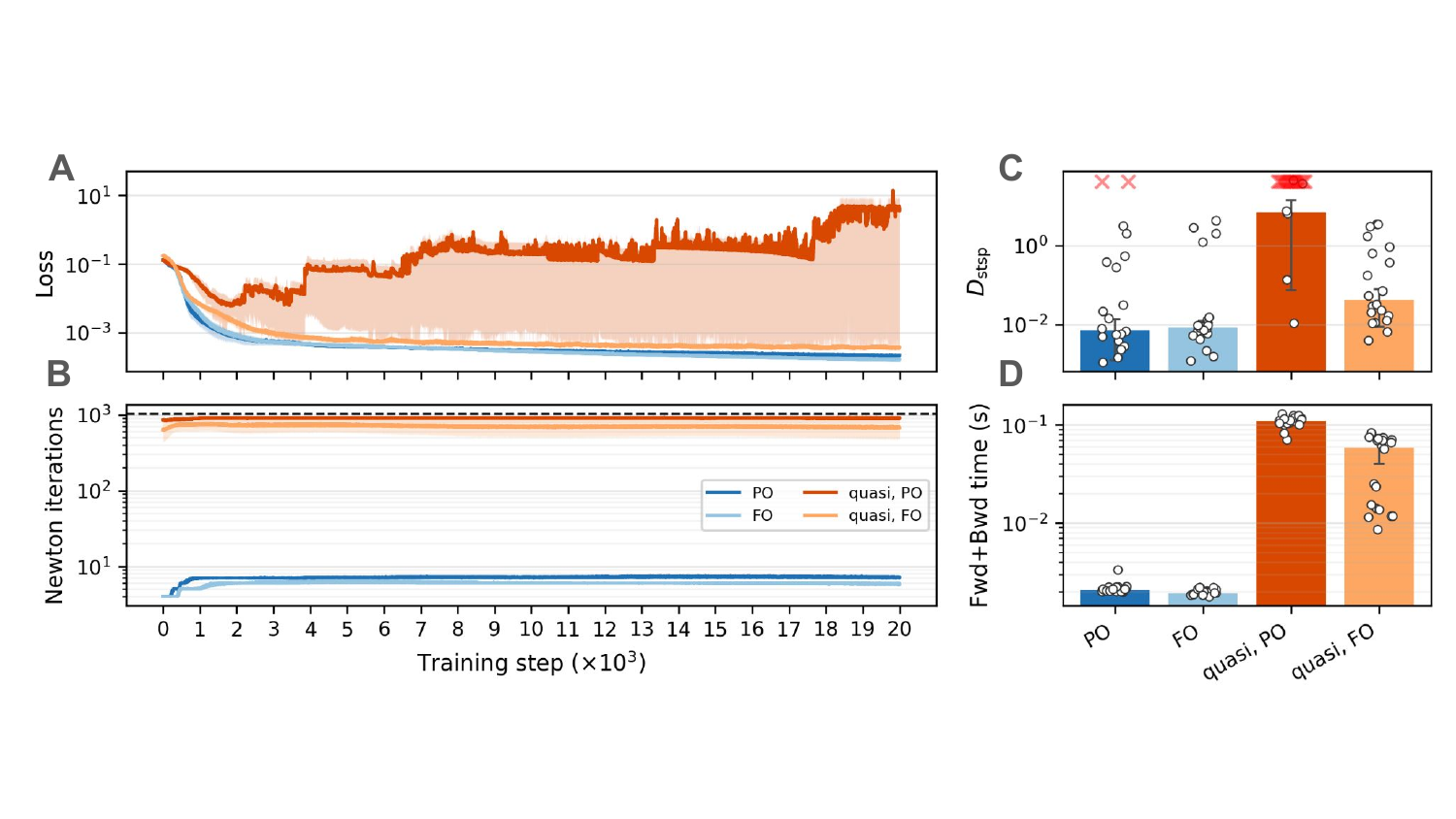}
    \caption{Training dynamics and reconstruction quality of a shPLRNN ($M=5$, $L=50$) on the Lorenz-63 system under fully observed (FO, $N=3$) and partially observed (PO, $N=1$) conditions. \textbf{A}: Training loss curves. \textbf{B}: Newton iterations per training step. \textbf{C}: State-space divergence $D_\mathrm{stsp}$ (lower is better); red crosses mark runs where trajectory divergence during generation produced NaNs. \textbf{D}: Wall-clock runtime per training step. All reported values are median $\pm$ MAD across $20$ runs.}
    \label{fig:qdeer_ablation}
\end{figure}

\subsection{Reconstructing dynamical systems with long-term dependencies}

To demonstrate the advantage of processing long sequences, we generated data from 1) a Lorenz-96 system ($N=6$) \citep{lorenz_predictability_1996} augmented with a sinusoidal forcing term of period $15{,}000$ discrete time steps, thereby introducing an explicit long time scale into the dynamics, and 2) a bursting neuron biophysical model \citep{durstewitz_dynamical_2007} which exhibits long inter-burst-intervals of $>10^4$ time steps (see Appx. \ref{appx:datasets} for further details on data generation). We trained shPLRNNs with $M=10$ for the Lorenz96 system and $M=6$ for the bursting neuron, where both settings share $L=128$, to reconstruct the underlying dynamics while systematically varying the training sequence length. To control for the fact that longer sequences expose the model to more data per gradient update, we held the product $B \cdot T = 2^{15} = 32{,}768$ fixed. 
For full experimental details see Appx.~\ref{appx:experimental_details}.

Figure \ref{fig:fig3}\textbf{A} reports the long-term reconstruction measure $D_{\mathrm{stsp}}^{\mathrm{DE}}$ as a function of sequence length. For both systems, the measure improves with increasing sequence length, confirming that the model benefits substantially from training specifically on longer contexts, while keeping the \textit{total} amount of training data constant. The $\textrm{RMSE}(128)$, however, already saturates for shorter sequence lengths, as expected by design of this short-term measure, with mean $\pm$ SEM over all sequence lengths and runs given by $\textrm{RMSE}(128) = 0.27 \pm 0.02$ for the Lorenz-96 and $0.047 \pm 0.003$ for the bursting neuron system, respectively (see also Fig.~\ref{fig:rmse_fig3}). Training at such sequence lengths becomes tractable only through the favorable scaling of GTF-DEER: whereas training on sequences of length $32{,}768$ required on average ${\approx}\,10$ minutes for the Lorenz-96 setting on a single NVIDIA RTX 6000 Blackwell GPU, sequential GTF \citep{hess_generalized_2023} would require approximately $34$ hours.

\begin{figure}
    \centering
    \includegraphics[width=1.0\linewidth]{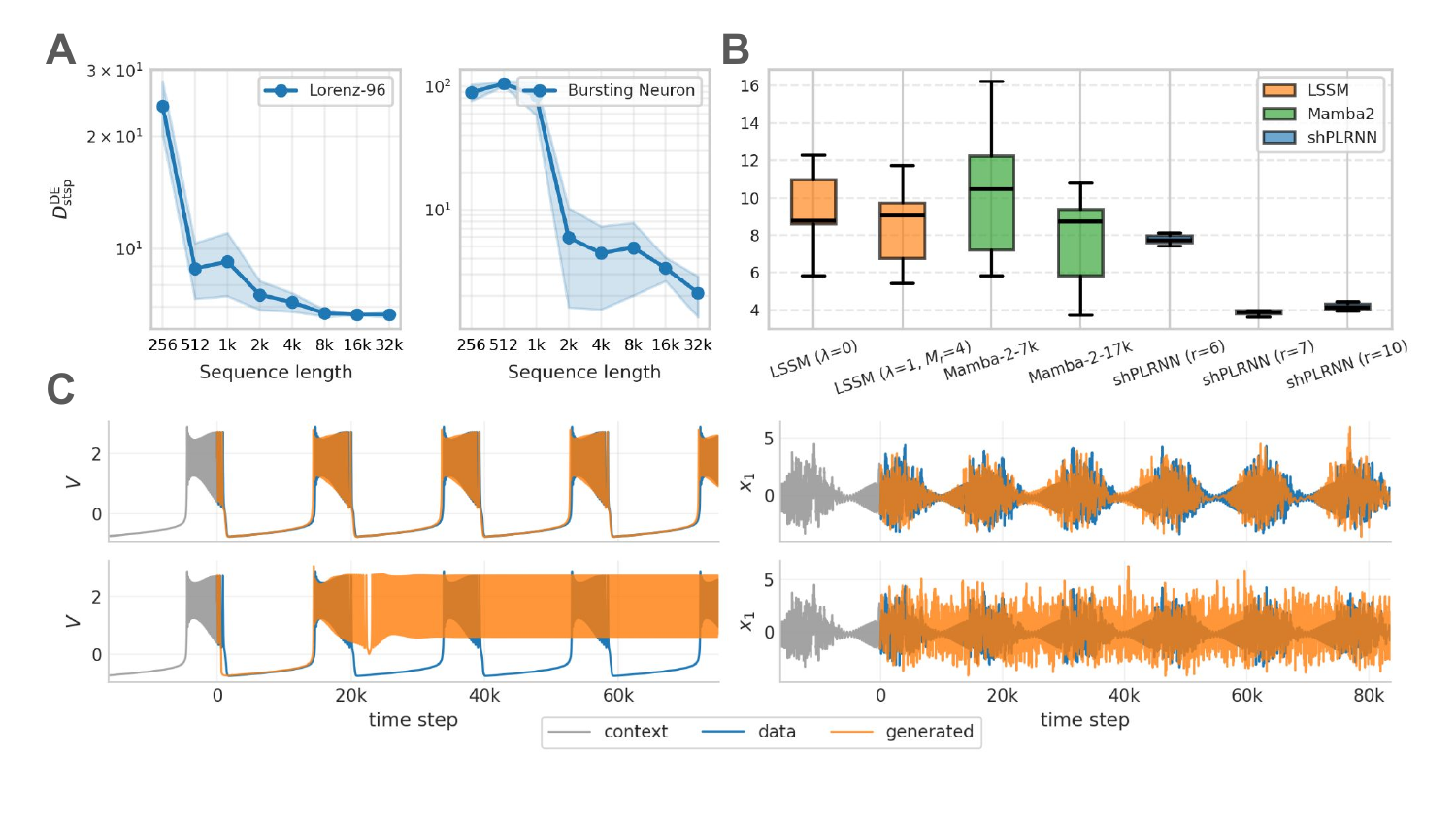}
    \caption{\textbf{A}: Long-term measure $D_{\mathrm{stsp}}^{\mathrm{DE}}$ as a function of sequence length for shPLRNNs trained on the forced Lorenz-96 and bursting neuron system.
    \textbf{B}: $D_{\mathrm{stsp}}^{\mathrm{DE}}$ evaluated on the forced Lorenz-96 system for different models (see Fig. \ref{fig:model_comparison_traces} for qualitative comparison). 
    Note that even Mamba-2 cannot match the performance of the $r=7$ shPLRNN trained with GTF-DEER even when allowed many more parameters. 
    \textbf{C}: Example reconstructions for long-sequence GTF-DEER (top) vs. a model trained only on standard-length sequences (bottom) for the bursting neuron (left) and forced Lorenz-96 (right).
    }
    \label{fig:fig3}
\end{figure}

\subsection{Linear vs. nonlinear training-time recurrences}\label{subsec:recurrence_comparison}
Finally, we compare the DSR performance of LSSMs (Eq.~\eqref{eq:shplrnn_ssm} to that of nonlinear RNNs (Eq.~\eqref{eq:shplrnn_nonlinear}), trained by GTF-DEER, on the forced Lorenz-96 system. For comparability, we fix common hyperparameters, i.e. both models use $M=10$, $L=128$, $T=81{,}920$ and $B=1$. To investigate the effect of the low-rank constraint in the LSSM (Eq.~\eqref{eq:shplrnn_ssm_test_time}), we also train a low-rank version of the shPLRNN, where $\bm{W}:=\bm{W}_L\cdot \bm{W}_R$ with $\bm{W}_L \in \mathbb{R}^{M \times r}$ and $\bm{W}_R \in \mathbb{R}^{r \times L}$. To account for MAR applied to the shPLRNN, we add a similar regularization term to the training of the LSSM, see Appx.~\ref{appx:regularizations} for details. Figure \ref{fig:fig3}\textbf{B} summarizes the results of this ablation quantitatively through $D^\mathrm{DE}_\mathrm{stsp}$, while qualitative example reconstructions are provided in Fig. \ref{fig:model_comparison_traces}. The LSSMs as well as the shPLRNN with their rank limited to the number of observed variables, $r=N=6$, all fall behind in reconstructing the limiting dynamics of the forced Lorenz-96 system, failing to capture the influence of the sinusoidal forcing pattern. Simply increasing the rank by just $1$ ($r=7$) equips the shPLRNN with the expressivity needed to capture the latent forcing dynamics, hence significantly improving reconstruction quality. We also trained a Mamba-2-based model \citep{dao2024transformers} with a comparable parameter count on this task (see Appx.~\ref{appx:experimental_details} for details). Even though Mamba-2 avoids the low-rank constraint through its selection mechanism, where recurrence parameters are parameterized by the data, $\bm{A} \rightarrow \bm{A}(\bm{x}_t)$ and $\bm{U} \rightarrow \bm{U}(\bm{x}_t)$, it performs much worse than the $r=7$ shPLRNN trained by GTF-DEER and requires more trainable parameters (see Mamba-2-7k vs. Mamba-2-17k), while at the same time exhibiting much higher variance in training outcome. The latter observation highlights another important feature of the training algorithm: GTF-DEER improves autoregressive roll-outs at test time, since GTF reduces exposure bias by keeping $\alpha$, i.e. the forcing strength, minimal. Applying similar strategies to address exposure bias to linear training-time recurrences destroys their amenability to efficient parallelization (see Appx.~\ref{appx:sec:scheduled_sampling_breaks_scan}). 
\section{Discussion}
We introduced GTF-DEER, a parallel-in-time training algorithm that combines the long-sequence scalability of DEER with the chaos-taming properties of Generalized Teacher Forcing (GTF). Through forcing, GTF-DEER turns the RNN during training into a stable DS, improving empirical runtime scaling from the worst-case of $\mathcal{O}(T\log T)$ under unstable dynamics to the average case of $\mathcal{O}((\log T)^2)$ for contractive RNNs \citep{gonzalez2025predictability}. Leveraging fast training from arbitrary sequence lengths, we provided what is to our knowledge the first systematic evidence that DSR benefits from training on such long sequences when the data carry slow timescales. Furthermore, we showed that feeding predictions of linear SSMs (LSSMs) with nonlinear read-out back into the recurrence during autoregressive generation imposes a low-rank constraint that hinders the LSSM from inferring unobserved dynamical variables in the data. Even when SSMs ameliorate this problem by introducing gating or selection mechanisms \cite{gu2024mamba, dao2024transformers}, DSR quality suffers due to the inherent exposure bias of conventional teacher forcing \cite{bengio2015scheduled, ranzato2016sequence}, which degrades autoregressive roll-outs during testing. For modern SSMs, this problem cannot be resolved without trading the linear recurrence for a (locally) nonlinear one, such that parallelization by parallel scan and similar algorithms breaks down (cf. Appx. \ref{appx:sec:scheduled_sampling_breaks_scan}, \citep{blelloch1990prefix, gu_efficiently_2022, smith_simplified_2023, gu2024mamba, dao2024transformers}).

\paragraph{Limitations} GTF-DEER inherits DEER's cubic work in the latent dimension $M$ per Newton-iteration, $\mathcal{W}_{\mathrm{full}}=\mathcal{O}(M^3T)$, and the diagonal (quasi-DEER) approximation that would alleviate this degrades gradients in partially observed settings substantially (Sect.~\ref{subsec:gtf_deer_ablations}). This sets practical limits to the latent dimension $M$ (see Fig. \ref{fig:1}\textbf{A}). Finally, Proposition \ref{prop:gtf_deer_convergence} strictly holds for $M \leq N$, while for $M > N$ we only find empirical evidence that GTF-DEER's convergence also holds for the settings and problems considered in this work. 

\begin{ack}
This work was supported by individual grants Du 354/15-1 (project no.\ 502196519) and Du 354/18-1 (project no.\ 567025973) from the German Research Foundation (DFG), by the German Ministry for Research, Astronautics, and Technology (BMFTR) through NAILIt (``Neuro-Inspired AI for Learning \& Inference in Non-Stationary Environments''), grant number 01GQ2509A.
\end{ack}

\bibliographystyle{plain}
\bibliography{references}

\newcommand{\beginappendix}{%
        \setcounter{table}{0}
        \renewcommand{\thetable}{A\arabic{table}}%
        \setcounter{figure}{0}
        \renewcommand{\thefigure}{A\arabic{figure}}%
}

\clearpage
\appendix
\beginappendix

\section*{Appendix}

\section{Absence of chaos-induced exploding gradients in linear SSMs}\label{appx:sec:no_exploding_grads_for_ssms}

The gradient propagation properties of linear SSMs are well-established in prior work \citep{mikhaeil_difficulty_2022, orvieto2023resurrecting, zucchet2024recurrent} and we briefly recap them here for completeness. Consider a discrete-time linear SSM with time-invariant dynamics:
\begin{equation}
    \bm{z}_t = \bm{A} \bm{z}_{t-1} + \bm{v}_t,
\end{equation}
where $\bm{z}_t \in \mathbb{R}^M$ is the latent state, $\bm{A} \in \mathbb{R}^{M \times M}$ is the state transition matrix, and $\bm{v}_t \in \mathbb{R}^M$ is an extrinsic input. Differentiating the recurrence yields a constant Jacobian $\bm{J}_i = \partial \bm{z}_i / \partial \bm{z}_{i-1} = \bm{A}$, so the BPTT chain-rule product collapses to
\begin{equation}
    \frac{\partial \bm{z}_t}{\partial \bm{z}_r} = \prod_{i=r+1}^{t} \bm{J}_i = \bm{A}^{t-r},
    \qquad
    \left\| \frac{\partial \bm{z}_t}{\partial \bm{z}_r} \right\| \le \| \bm{A} \|^{t-r}.
\end{equation}
Gradient propagation is therefore independent of the state trajectory $\{\bm{z}_i\}$ and inputs $\{\bm{v}_i\}$, and is fully determined by the spectrum of $\bm{A}$. As a consequence, linear SSMs cannot exhibit chaos-induced gradient explosion: constraining the eigenvalues of $\bm{A}$ (typically near the unit disk) is sufficient to control gradient magnitudes.
\section{Proof of Proposition~\ref{prop:gtf_deer_convergence}: GTF-DEER convergence}\label{appx:gtf_deer_proof}
\begin{proof}
Since $M \leq N$ and the observation matrix $\bm{B}$ has full column rank,
$\bm{B}^+ \bm{B} = \bm{I}_M$, and by Eq.~\eqref{eq:error_forcing}, $\bm{P}_\alpha = (1-\alpha)\, \bm{I}_M$.
The Jacobian decomposition simplifies to Eq.~\eqref{eq:jacobian_decomp}:
\begin{equation}\label{eq:jac_vanilla}
     \tilde{\bm{J}}_{t} := \tilde{\bm{J}}(\bm{z}_{t-1}) := \bm{J}_{F \circ \delta_\alpha}(\bm{z}_{t-1})
    = (1-\alpha)\, \bm{J}_F(\tilde{\bm{z}}_{t-1}).
\end{equation}
Taking spectral norms leads to:
\begin{equation}\label{eq:contraction_step}
    (1-\alpha) \norm{\bm{J}_F(\tilde{\bm{z}}_{t-1})}_2
    \leq (1-\alpha)\, \tilde{\sigma}_{\max}
    = \rho.
\end{equation}
Since $\alpha > \alpha^* = 1 - 1/\tilde{\sigma}_{\mathrm{max}}$, we have
$(1-\alpha) < 1/\tilde{\sigma}_{\mathrm{max}}$ and hence $\rho < 1$.

For any product of $k$ consecutive forced Jacobians:
\begin{equation}\label{eq:product_decay}
    \norm{\tilde{\bm{J}}_{t} \tilde{\bm{J}}_{t-1}\cdots\tilde{\bm{J}}_{t-k+1}}_2 \leq \prod_{i=0}^{k-1} \norm{\tilde{\bm{J}}_{t-i}}_2 \leq \rho^{k}
\end{equation}
Then by Def.~\ref{def:predictability}, we have
\begin{equation}
    \lambda(\bm{z})
    = \lim_{T \to \infty} \frac{1}{T} 
\log \norm{\prod_{i=0}^{T-1} \tilde{\bm{J}}_{T-i}}_2
    \leq \lim_{T \to \infty} \frac{1}{T} \log \rho^T
    = \log \rho < 0.
\end{equation}
Thus the forced system is \emph{contracting}.
\end{proof}

\section{Scheduled sampling breaks the linearity of LSSMs}
\label{appx:sec:scheduled_sampling_breaks_scan}
Conventional teacher forcing \citep{Goodfellow-et-al-2016} feeds the ground-truth observation $\bm{x}_{t-1}$ into the latent recurrence
\begin{equation}\label{eq:appx:ss_recurrence}
    \bm{z}_t = \bm{A}\bm{z}_{t-1} + \bm{U}\bm{x}_{t-1}
               + \bm{C}\bm{s}_t + \bm{h}.
\end{equation}
at every step. While the following holds for any forcing strategy that re-introduces a feedback from model-predicted observations $\hat{\bm{x}}$ into the latent recurrence, we will exemplify the problem using the well established method of scheduled sampling \citep{bengio2015scheduled}. Scheduled sampling switches between teacher
forcing and free-running generation during training by replacing $\bm{x}_{t-1}$ with the model's own prediction $\hat{\bm{x}}_{t-1}$ with a scheduled probability
$\epsilon_t \in [0,1]$ ,
\begin{equation}
    \tilde{\bm{x}}_{t-1} =
    \begin{cases}
        \bm{x}_{t-1}        & \text{with probability } 1 - \epsilon_t, \\
        \hat{\bm{x}}_{t-1}  & \text{with probability } \epsilon_t,
    \end{cases}
\end{equation}
so that the latent recurrence used during training reads
\begin{equation}\label{eq:appx:ss_recurrence}
    \bm{z}_t = \bm{A}\bm{z}_{t-1} + \bm{U}\tilde{\bm{x}}_{t-1}
               + \bm{C}\bm{s}_t + \bm{h}.
\end{equation}
Whenever the prediction branch is taken, substituting
$\hat{\bm{x}}_{t-1} = \bm{B}\phi(\bm{V}\bm{z}_{t-1} + \bm{b})$ from
Eq.~\eqref{eq:shplrnn_ssm} into Eq.~\eqref{eq:appx:ss_recurrence} yields
\begin{equation}\label{eq:appx:ss_nonlinear}
    \bm{z}_t = \bm{A}\bm{z}_{t-1}
               + \overline{\bm{W}}\,\phi(\bm{V}\bm{z}_{t-1} + \bm{b})
               + \bm{C}\bm{s}_t + \bm{h},
\end{equation}
with $\overline{\bm{W}} = \bm{U}\bm{B}$, i.e.\ exactly the test-time recurrence of
Eq.~\eqref{eq:shplrnn_ssm_test_time}. The transition
$\bm{z}_{t-1}\mapsto\bm{z}_t$ is therefore no longer linear (affine) in $\bm{z}_{t-1}$ but
contains the nonlinearity $\phi$.

\paragraph{Incompatibility with parallel scan}
Evaluation of Eq.~\eqref{eq:shplrnn_ssm} via parallel scan
\citep{blelloch1990prefix, martin2018parallelizing, smith_simplified_2023} hinges on each transition being representable as an affine map $\bm{z}_t = \bm{A}_t\bm{z}_{t-1} + \bm{b}_t$,
identified with the pair $(\bm{A}_t, \bm{b}_t)$. Composition of two such maps,
\begin{equation}
    (\bm{A}_2, \bm{b}_2) \circ (\bm{A}_1, \bm{b}_1)
    = \bigl(\bm{A}_2\bm{A}_1,\; \bm{A}_2\bm{b}_1 + \bm{b}_2\bigr),
\end{equation}
is associative and again affine, such that partial compositions over disjoint chunks of the training sequence can be combined in any order, enabling an $\mathcal{O}(\log T)$-depth
reduction. In the standard LSSM recurrence the input term $\bm{U}\bm{x}_{t-1}$
depends only on observed data and contributes solely to $\bm{b}_t$, leaving the
map affine in the latent state.

Once $\bm{x}_{t-1}$ is replaced by $\hat{\bm{x}}_{t-1}$, the input becomes a
nonlinear function of $\bm{z}_{t-1}$ and the transition is no longer of this
affine form. The composition of two such nonlinear transitions cannot in general
be represented by a fixed, finite-parameter operator that is \textit{independent} of the
latent state, such that the recurrence can not be reduced to an associative scan. Training must instead unroll Eq.~\eqref{eq:appx:ss_nonlinear} sequentially, recovering the
$\mathcal{O}(T)$ bottleneck.

\section{GTF-DEER gradients}\label{appx:backward_pass}

We define the per-timestep MSE loss function $\ell : \mathbb{R}^N \times \mathbb{R}^N \to \mathbb{R}$,
\begin{equation}\label{eq:mse_loss_one_step}
    \ell(\bm{x}, \hat{\bm{x}}) = \frac{1}{N} \norm{\bm{x} - \hat{\bm{x}}}_2^2,
\end{equation}
and the total loss, $\mathcal{L} : (\mathbb{R}^N)^T \times (\mathbb{R}^N)^T \to \mathbb{R}$, for the full time series,
\begin{equation}\label{eq:mse_loss_series}
    \mathcal{L}(\bm{x}_{1:T}, \hat{\bm{x}}_{1:T}) = \frac{1}{T} \sum_{t=1}^T \ell(\bm{x}_t, \hat{\bm{x}}_t) = \frac{1}{NT} \sum_{t=1}^T \norm{\bm{x}_t - \hat{\bm{x}}_t}_2^2,
\end{equation}
where in the model \eqref{eq:general_stsp_model}, $\bm{\hat{x}}_t = \bm{G}_\psi(\bm{z}_t)$ are the observations associated with the latent states; we consider $\bm{x}_{1:T} \in \mathbb{R}^{NT} \cong (\mathbb{R}^N)^T$ as a stacked vector; and in practice truncate a warm-up period $0 < T_w < T$ and only supply vectors $\bm{x}, \hat{\bm{x}}$ of size $N\tilde{T}$, $\tilde{T} \coloneqq T-T_w$, cf.\ Sect.~\ref{subsec:longterm_dependencies} and Eq.~\eqref{eq:mse_loss_warmup}.

To optimize the parameters $\bm{\theta}$ of the RNN $\tilde{F}_{\bm{\theta},\alpha} = F_{\bm{\theta}} \circ \delta_{\alpha, \bm{B}}$, we need to compute the gradient of the loss function,
\begin{equation}
    \frac{\partial\mathcal{L}}{\partial\theta} = \frac{\partial\mathcal{L}}{\partial\bm{z}} \frac{\partial\bm{z}}{\partial\theta}
\end{equation}
for all elements $\theta$ (for notational simplicity, we choose not to write any index) in the abstract parameter vector $\bm{\theta}$ which comprises all matrix weights.

In the DEER forward pass, we use the residual
\begin{gather}\label{eq:residual}
    \bm{r}_{\bm{\theta}, \alpha}(\bm{z}_{t-1}, \bm{z}_{t}) = \bm{z}_t - \tilde{F}_{\bm{\theta},\alpha}(\bm{z}_{t-1}, \bm{x}_{t-1}) \in \mathbb{R}^M,\\
    \bm{R}_{\bm{\theta},\alpha}(\bm{z}_{1:T}) = (\bm{r}_\alpha(\bm{z}_0, \bm{z}_1), \dots, \bm{r}_\alpha(\bm{z}_{T-1}, \bm{z}_T)) \in \mathbb{R}^{MT},
\end{gather}
where $\bm{z}_0$ is a fixed initial state (which is not predicted), to obtain the predicted time series $\bm{z}_{1:T} \in \mathbb{R}^{MT} \cong (\mathbb{R}^M)^T$.
At the end of each forward pass,
\begin{equation}
    \bm{R}_\alpha(\bm{z}_{1:T},\bm{\theta}) \coloneqq \bm{R}_{\bm{\theta}, \alpha}(\bm{z}_{1:T}) = \bm{0} \in \mathbb{R}^{MT},
\end{equation}
so by the implicit function theorem,
\begin{equation}
    \frac{\partial\bm{R}_\alpha}{\partial\bm{z}_{1:T}} \frac{\partial\bm{z}_{1:T}}{\partial\theta}  + \frac{\partial\bm{R}_\alpha}{\partial\theta} = \bm{0} \in \mathbb{R}^{MT}.
\end{equation}
Rearranging this equation gives us
\begin{equation}
    \frac{\partial\bm{z}_{1:T}}{\partial\theta} = - \left(\frac{\partial\bm{R}_\alpha}{\partial\bm{z}_{1:T}}\right)^{-1} \frac{\partial\bm{R}_\alpha}{\partial\theta}.
\end{equation}
We insert this into the gradient of the loss:
\begin{equation}
    \frac{\partial\mathcal{L}}{\partial\theta} = \underbrace{- \frac{\partial\mathcal{L}}{\partial\bm{z}_{1:T}} \left(\frac{\partial\bm{R}_\alpha}{\partial\bm{z}_{1:T}}\right)^{-1}}_{\eqqcolon \bm{V}^\top \in \mathbb{R}^{1\times MT}} \frac{\partial\bm{R}_\alpha}{\partial\theta},
\end{equation}
where we write $\bm{V} = (\bm{v}_1, \dots, \bm{v}_T) \in \mathbb{R}^{MT} \cong (\mathbb{R}^M)^T$.
Rather than explicitly inverting the huge Jacobian matrix $\frac{\partial\bm{R}_\alpha}{\partial\bm{z}_{1:T}}$, we instead solve the adjoint problem
\begin{equation}\label{eq:adjoint_eq}
    \bm{V}^\top \frac{\partial\bm{R}_\alpha}{\partial\bm{z}_{1:T}} = - \frac{\partial\mathcal{L}}{\partial\bm{z}_{1:T}}.
\end{equation}
To make this more explicit, we use 
\begin{gather}
    \frac{\partial\bm{r}_{\alpha, \bm{\theta}}(\bm{z}_{t-1}, \bm{z}_t)}{\partial\bm{z}_{t-1}} = -\frac{\partial\tilde{F}_{\bm{\theta},\alpha}}{\partial\bm{z}}(\bm{z}_{t-1}) \eqqcolon - \bm{J}_{\tilde{F}}(\bm{z}_{t-1}) \eqqcolon - \tilde{\bm{J}}_{t} \in \mathbb{R}^{M\times M}, \\
    \frac{\partial\bm{r}_{\alpha, \bm{\theta}}(\bm{z}_{t-1}, \bm{z}_t)}{\partial\bm{z}_{t}} = \bm{I}_M,
\end{gather}
and write
\begin{equation}\label{eq:blockdiag_jacobian}
    \frac{\partial\bm{R}_\alpha}{\partial\bm{z}_{1:T}} = {\renewcommand{\arraystretch}{1.5}\begin{pmatrix}
        \bm{I}_M & 0 & \cdots & \cdots & 0 \\
        -\tilde{\bm{J}}_2 & \bm{I}_M & 0 & \cdots & 0 \\
        0 & -\tilde{\bm{J}}_3 & \bm{I}_M & \ddots & 0 \\
        \vdots & \vdots & \ddots & \ddots & 0 \\
        0 & 0 & 0 & -\tilde{\bm{J}}_{T} & \bm{I}_M
    \end{pmatrix}}
\end{equation}
and
\begin{equation}
    \frac{\partial\bm{R}_\alpha}{\partial\theta} = \frac{\partial\bm{R}_\alpha}{\partial\tilde{F}_{\bm{\theta},\alpha}} \frac{\partial\tilde{F}_{\bm{\theta},\alpha}}{\partial\theta} = - \frac{\partial\tilde{F}_{\bm{\theta},\alpha}}{\partial\theta}.
\end{equation}
The block-bidiagonal structure of the matrix turns \eqref{eq:adjoint_eq} into a (backward) recursion in $t$:
\begin{equation}\label{eq:backward_recursion}
    \bm{v}_T^\top = - \frac{\partial\mathcal{L}}{\partial\bm{z}_T} ,\qquad \bm{v}_{t-1}^\top = \bm{v}_t^\top \bm{J}_{\tilde{F}}(\bm{z}_{t-1}) - \frac{\partial\mathcal{L}}{\partial\bm{z}_{t-1}},
\end{equation}
where, substituting in the observation model, $\hat{\bm{x}} = G_\psi(\bm{z})$,
\begin{equation}
    \frac{\partial\mathcal{L}}{\partial\bm{z}_{t-1}} = \frac{\partial\mathcal{L}}{\partial\hat{\bm{x}}_{t-1}} \frac{\partial G_\psi}{\partial\bm{z}_{t-1}} = \frac{2}{NT} (\hat{\bm{x}}_{t-1}-\bm{x}_{t-1})^\top \frac{\partial G_\psi}{\partial\bm{z}_{t-1}}.
\end{equation}
In the same way as in the forward pass, this recursion can be solved in parallel via an associative scan. Note that for quasi-DEER, the Jacobians $\bm{J}_{\tilde{F}}$ in \eqref{eq:backward_recursion} are replaced by a diagonal approximation and the equation becomes
\begin{equation}
    \bm{v}_T^\top = - \frac{\partial\mathcal{L}}{\partial\bm{z}_T} ,\qquad \bm{v}_{t-1}^\top = \left(\bm{v}_t \odot \mathrm{diag}\left(\bm{J}_{\tilde{F}}(\bm{z}_{t-1})\right)\right)^\top - \frac{\partial\mathcal{L}}{\partial\bm{z}_{t-1}}
\end{equation}
with the element-wise product $\odot$.

In summary, the loss is therefore calculated simply by solving the recursion \eqref{eq:backward_recursion} using the final latent trajectory and observations computed via the forward pass, and inserting the result into
\begin{equation}
    \frac{\partial\mathcal{L}}{\partial\theta} = - \bm{V}^\top \frac{\partial\tilde{F}_{\bm{\theta},\alpha}}{\partial\theta}.
\end{equation}

\section{Model setup, loss and regularizations}\label{appx:regularizations}
\paragraph{Parameterizations} For both the RNN and LSSM we parameterize $\bm{A} = \operatorname{diag}(\tanh(\bar{\bm{A}}))$ where $\bar{\bm{A}}\in \mathbb{R}^M$. The nonlinearity avoids instabilities in the $\bm{A}\bm{z}_{t-1}$ term when $\norm{\bm{A}}_2\geq1$. 

\paragraph{Initialization} To facilitate the capture of long time scales in the data, we initialize the RNN to exhibit long time scales at initialization, by initializing the RNN near the identity:
\begin{equation}
    \begin{aligned}
        \bm{\bar{A}} &= \operatorname{artanh}(\kappa) \ \mathbf{I}_M \\
        W_{ij} &\sim \mathcal{U}\left(-(1-\kappa) L^{-1/2}, (1-\kappa) L^{-1/2}\right) \\
        V_{ij} &\sim \mathcal{U}\left(-(1-\kappa)  M^{-1/2}, (1-\kappa) M^{-1/2}\right) \\
        C_{ij} &\sim \mathcal{U}\left(-(1-\kappa)  K^{-1/2}, (1-\kappa)K^{-1/2}\right)\\
        \bm{b} &= \bm{0} \\
        \bm{h} &= \bm{0},        
    \end{aligned}
\end{equation}
where $\mathcal{U}$ is the uniform distribution. In practice we use $\kappa=0.9995$. For $M \ge N$, the observation matrix $\bm{B}$ is initialized to perform identity read-out of the first $N$ units in the RNN, i.e. $\bm{B} = \begin{bmatrix} \bm{I}_{N} & \bm{0}_{\,N \times (M-N)} \end{bmatrix}$.

For the LSSM, we follow similar strategy:
\begin{equation}
    \begin{aligned}
        \bm{\bar{A}} &= \operatorname{artanh}(\kappa) \ \mathbf{I}_M \\
        U_{ij} &\sim \mathcal{U}\left(-(1-\kappa) N^{-1/2}, (1-\kappa) N^{-1/2}\right) \\
        C_{ij} &\sim \mathcal{U}\left(-(1-\kappa)  K^{-1/2}, (1-\kappa)K^{-1/2}\right)\\
        \bm{h} &= \bm{0},        
    \end{aligned}
\end{equation}
and the observation model is initialized as 
\begin{equation}
    \begin{aligned}
        B_{ij} &\sim \mathcal{U}\left(-(1-\kappa) L^{-1/2}, (1-\kappa) L^{-1/2}\right) \\
        V_{ij} &\sim \mathcal{U}\left(-(1-\kappa)  M^{-1/2}, (1-\kappa)M^{-1/2}\right)\\
        \bm{b} &= \bm{0}.        
    \end{aligned}
\end{equation}
For all experiments, we use $\phi(\cdot) = \mathrm{ReLU}=\max(0,\cdot)$.

\paragraph{Manifold attractor regularization (MAR)} For the nonlinear RNN introduced in Eq.~\eqref{eq:shplrnn_nonlinear}, we can encourage a slow manifold in the last $M_r$ units by adding the term \citep{schmidt_identifying_2021}
\begin{equation}\label{eq:mar_loss_latent}
    \mathcal{L}_{\mathrm{MAR}}(\bm{\theta}_{\mathrm{RNN}}) = \frac{\lambda_{\mathrm{MAR}}}{M_r}  \sum_{i=M-M_r+1}^M \left[ \lvert 1-A_{ii}\rvert^p + \frac{1}{L}\sum_{j=1}^L\left( \lvert W_{ij}\rvert ^p + \lvert V_{ji}\rvert^p\right) + \lvert h_i\rvert^p\right]
\end{equation}
to the loss function, where $\lambda_\mathrm{MAR}$ is a regularization parameter and $p \in \{1, 2\}$ determines the type of penalty.
In practice, we found it beneficial to scale regularization terms of connectivity matrices $\bm{W}$ and $\bm{V}$ based on their variance at initialization, i.e.
\begin{equation}\label{eq:mar_loss_latent_initscale}
    \mathcal{L}_{\mathrm{MAR}}(\bm{\theta}_{\mathrm{RNN}}) = \frac{\lambda_{\mathrm{MAR}}}{M_r}  \sum_{i=M-M_r+1}^M \left[ \lvert 1-A_{ii}\rvert^p + \frac{1}{L}\sum_{j=1}^L\left( \gamma_{\bm{W}}\lvert W_{ij}\rvert ^p + \gamma_{\bm{V}}\lvert V_{ji}\rvert^p\right) + \lvert h_i\rvert^p\right],
\end{equation}
where $\gamma_{\bm{W}} = \frac{1}{3L}$,  $\gamma_{\bm{V}} = \frac{1}{3M}$.
For $\lambda_{\mathrm{MAR}} \to \infty$, this regularization leads to stable error propagation by imposing a block-diagonal identity structure in the model Jacobian $\bm{J}_F$, as we demonstrate below.

Recall the nonlinear RNN from Eq.~\eqref{eq:shplrnn_nonlinear} with $\bm{A} = \mathrm{diag}(a_1, \ldots, a_M)$:
\begin{equation}
    \bm{z}_t = \bm{A}\bm{z}_{t-1} + \bm{W}\phi(\bm{V}\bm{z}_{t-1} + \bm{b}) + \bm{C}\bm{s}_t + \bm{h}.
\end{equation}
Its Jacobian is
\begin{equation}
    \bm{J}_F(\bm{z}_{t-1}) = \bm{A} + \bm{W}\,\mathrm{diag}\!\left(\phi'(\bm{V}\bm{z}_{t-1} + \bm{b})\right)\bm{V}.
\end{equation}

Let $M_s := M - M_r$ and partition $\bm{z} = (\bm{z}^{(s)}, \bm{z}^{(r)})^\top$ into
``fast'' (unregularized) and ``slow'' (regularized) units. The diagonal $\bm{A}$
and the non-linear coupling term decompose as
\begin{equation}
    \bm{A} = \begin{pmatrix} \bm{A}_s & \bm{0} \\ \bm{0} & \bm{A}_r \end{pmatrix},
    \qquad
    \bm{W} = \begin{pmatrix} \bm{W}_s \\ \bm{W}_r \end{pmatrix},
    \qquad
    \bm{V} = \begin{pmatrix} \bm{V}_s & \bm{V}_r \end{pmatrix}.
\end{equation}

As $\lambda \to \infty$, the MAR penalty, Eq.~\eqref{eq:mar_loss_latent}, drives
\begin{equation}
    \bm{A}_r \to \bm{I}_{M_r}, \qquad \bm{W}_r \to \bm{0}, \qquad \bm{V}_r \to \bm{0}.
\end{equation}
Writing $\bm{D}_s(\bm{z}_{t-1}) := \mathrm{diag}\!\left(\phi'(\bm{V}_s \bm{z}^{(s)}_{t-1} + \bm{b})\right)$, the limiting Jacobian is block-diagonal:
\begin{equation}
    \bm{J}_F(\bm{z}_{t-1}) \;\xrightarrow{\lambda \to \infty}\;
    \begin{pmatrix}
        \bm{A}_s + \bm{W}_s \bm{D}_s(\bm{z}_{t-1}) \bm{V}_s & \bm{0} \\[4pt]
        \bm{0} & \bm{I}_{M_r}
    \end{pmatrix}.
\end{equation}

The latent space decouples into a nonlinear subsystem acting on $\bm{z}^{(s)}$
and a manifold-attractor subsystem on $\bm{z}^{(r)}$ with identity dynamics.
Consequently, the Jacobian product for a length $t$ sequence and $\lambda \to \infty$ satisfies
\begin{equation}
    \prod_{k=0}^{t-1} \bm{J}_F(\bm{z}_{t-1-k}) = \begin{pmatrix}
        \prod_{k=0}^{t-1}\left[\bm{A}_s + \bm{W}_s \bm{D}_s(\bm{z}_{t-1-k}) \bm{V}_s\right] & \bm{0} \\[4pt]
        \bm{0} & \bm{I}_{M_r}
    \end{pmatrix},
\end{equation}
where error signals propagate without decay or amplification along the $M_r$
directions, enabling stable capture of long-term dependencies.

\paragraph{Observation model regularization} In addition to MAR, we reduce the direct contribution of the $M_r$ MAR units to the read-out and hence discourage them from being forced through Eq.~\eqref{eq:error_forcing} by
\begin{equation}\label{eq:mar_loss_readout}
    \mathcal{L}_{1}(\bm{B}) = \frac{\lambda_1}{N\cdot M_r}  \sum_{i=1}^N\sum_{j=M-M_r}^M \lvert B_{ij}\rvert ^p,
\end{equation}
where $\bm{B}$ is the readout matrix (cf.\ Eq.~\eqref{eq:shplrnn_nonlinear}). We also follow \citep{hess_generalized_2023} and regularize $\bm{B}$ to stay well-conditioned by pulling its singular values towards $1$:
\begin{equation}
    \mathcal{L}_{2}(\bm{B}) = \frac{\lambda_2}{r}\sum_{i=1}^r\left(\sigma_i(\bm{B})-1\right)^2,
\end{equation}
where $r = \operatorname{rank}(\bm{B})$ and $\sigma_i(\bm{B})$ denotes the $i$-th singular value of $\bm{B}$. 
\paragraph{Total loss}
The overall loss function used for training of the nonlinear RNN defined in Eq.~\eqref{eq:shplrnn_nonlinear} is given by 
\begin{equation}\label{eq:rnn_loss}
\begin{aligned}
    \mathcal{L}(\bm{x}_{T_w+1:T}, \hat{\bm{x}}_{T_w+1:T};\ \bm{\theta}_{\mathrm{RNN}}, \bm{B}) =& \ \mathcal{L}_{\mathrm{MSE}}(\bm{x}_{T_w+1:T}, \hat{\bm{x}}_{T_w+1:T}) \\ &+ \mathcal{L}_{\mathrm{MAR}}(\bm{\theta}_{\mathrm{RNN}})+ \mathcal{L}_1( \bm{B)} + \mathcal{L}_2(\bm{B}),
\end{aligned}
\end{equation}
with
\begin{equation}\label{eq:mse_loss_warmup}
    \mathcal{L}_{\mathrm{MSE}}(\bm{x}_{T_w+1:T}, \hat{\bm{x}}_{T_w+1:T}) = \frac{1}{N(T-T_w)} \sum_{t=T_w+1}^T \norm{\bm{x}_t - \hat{\bm{x}}_t}_2^2.
\end{equation}

\paragraph{MAR for the LSSM}
Application of MAR to LSSMs (Eq.~\eqref{eq:shplrnn_ssm}) is straightforward with
\begin{equation}\label{eq:lssm_mar_loss}
    \mathcal{L}_{\mathrm{MAR}}(\bm{\theta}_{\mathrm{LSSM}}) = \frac{\lambda_{\mathrm{MAR}}}{M_r}  \sum_{i=M-M_r+1}^M \left[ \lvert 1-A_{ii}\rvert^p + \frac{1}{N}\sum_{j=1}^N\lvert U_{ij}\rvert^p + \lvert h_i\rvert^p\right].
\end{equation}
\section{Datasets}\label{appx:datasets}

\paragraph{Lorenz-63}
Our first benchmark system is the three-dimensional Lorenz-63 system~\citep{lorenz_deterministic_1963}. Its dynamics are described by the differential equations
\begin{align}
    \frac{dx}{dt} &= \sigma(y-x),\notag\\
    \frac{dy}{dt} &= x(\rho-z)-y,\\
    \frac{dz}{dt} &= xy-\beta z.\notag
\end{align}
We chose the classical parameters $\sigma=10$, $\rho=28$, $\beta = 8/3$, which put the system into a chaotic regime. We used a Runge-Kutta 4/5 scheme with $\Delta t = 0.01$ and integrated a trajectory of length $100{,}000$ time steps from an initial point $\bm{x}_0$. The trajectory was standardized per dynamical variable after generation.

\paragraph{Forced Lorenz-96} As a benchmark for long time scales, we equipped the vanilla Lorenz-96 equation \citep{lorenz_predictability_1996} with a sinusoidal forcing term:
\begin{equation}\label{eq:forced_lorenz96}
    \frac{dx_i}{dt} = (x_{i+1} - x_{i-2})\,x_{i-1} - x_i + F_0 + A\sin(\omega t),
    \qquad i = 1, \ldots, N,
\end{equation}
where $F_0$ is a constant forcing offset, $A$ is the amplitude and $\omega$ the frequency of the sinusoidal forcing. We set $F_0 = 14$, $A=12$, $\omega = 2\pi \ / \ 75$ and $N=6$. The autonomous Lorenz-96 system (i.e. $A=0$) is chaotic for $F\geq 8$, such that the sinusoidal forcing leads to a periodic switching between cyclic and chaotic dynamics (see Fig. \ref{fig:lor96_example}). For integration we used a Runge-Kutta 4/5 scheme with $\Delta t = 5 \cdot 10^{-3}$ and integrated two trajectories from different initial $t$ and $\bm{x}_0$ of length $500$ internal time units, leading to two trajectories of length $100{,}000$ time steps. One trajectory is contaminated with $5\%$ Gaussian observation noise and used for training, while the other trajectory is kept clean and used for testing. Both trajectories are standardized per dynamical variable after generation.

\paragraph{Bursting neuron model}
We used a three-dimensional simplified Hodgkin-Huxley-type neuron model; its dynamical variables are the membrane potential $V$ as well as two gating variables $n$ and $h$ which control the opening of fast and slow potassium channels, respectively \cite{durstewitz_implications_2009, schmidt_identifying_2021}:
\begin{equation}\label{eq:spiking_diff_eqs}
\begin{aligned}
\dot V &= \frac{1}{C}\Bigl[
I
- g_L (V-E_L)
- g_{\mathrm{Na}}\, m_\infty(V)\,(V-E_{\mathrm{Na}})
- g_K\, n\,(V-E_K) \\
&\hphantom{=\frac{1}{C}\Bigl[}
- g_M\, h\,(V-E_K)
- g_{\mathrm{NMDA}}\, s_\infty(V)\, (V-E_{NMDA})
\Bigr],\\[4pt]
\dot n &= \frac{n_\infty(V)-n}{\tau_n},\\
\dot h &= \frac{h_\infty(V)-h}{\tau_h},
\end{aligned}
\end{equation}
with
\begin{equation}
\begin{aligned}
m_\infty(V) &= \frac{1}{1+\exp\bigl((V_{h,\mathrm{Na}}-V)/k_{\mathrm{Na}}\bigr)}, & h_\infty(V) &= \frac{1}{1+\exp\bigl((V_{h,M}-V)/k_M\bigr)}, \\
n_\infty(V) &= \frac{1}{1+\exp\bigl((V_{h,K}-V)/k_K\bigr)}, &
s_\infty(V) &=\frac{1}{1+0.33\,\exp\bigl(-0.0625\, V\bigr)}.
\end{aligned}
\end{equation}
The model parameters we used for our experiments are reported in Table~\ref{tab:parameters_neuron_model}. Similar to the Lorenz-96 system, we used a Runge-Kutta 4/5 scheme with $\Delta t = 2.5 \cdot 10^{-2}$ and integrated two trajectories from different initial conditions of length $5000$ internal time units from which we cut $1000$ as transients, leading to two trajectories of length $160{,}000$ discrete time steps. One trajectory is contaminated with $5\%$ Gaussian observation noise and used for training, while the other trajectory is kept clean and used for testing and both trajectories are again standardized per dynamical variable after generation. Moreover, to make reconstruction more challenging, we throw away the $h$ variable after generation, such that the system is partially observed with $N=2$. For an example reconstruction, see Fig.~\ref{fig:fig3}.
\begin{table}[h]
\centering
\caption{Neuron model parameter settings}
\label{tab:parameters_neuron_model}
\resizebox{\linewidth}{!}{%
\begin{tabular}{cccccccccccccccccccc}
\toprule
$I$ &
$C$ &
$g_L$ &
$E_L$ &
$g_{\mathrm{Na}}$ &
$E_{\mathrm{Na}}$ &
$V_{h,\mathrm{Na}}$ &
$k_{\mathrm{Na}}$ &
$g_K$ &
$E_K$ &
$V_{h,K}$ &
$k_K$ &
$\tau_n$ &
$g_M$ &
$V_{h,M}$ &
$k_M$ &
$\tau_h$ &
$g_{\mathrm{NMDA}}$ &
$E_{\mathrm{NMDA}}$  \\
\midrule
$0$ &
$6$ &
$8$ &
$-80$ &
$20$ &
$60$ &
$-20$ &
$15$ &
$10$ &
$-90$ &
$-25$ &
$7$ &
$1$ &
$25.2$ &
$-18$ &
$5$ &
$1000$ &
$10.2$ &
$0$ \\
\bottomrule
\end{tabular}}
\end{table}

\section{Evaluation measures}\label{appx:measures}
\paragraph{State space divergence $D_{\textrm{stsp}}$}\label{appx:dstsp}
The state space divergence $D_{\textrm{stsp}}$ measures the geometrical disagreement between state distributions of the data $p(\bm{x})$ and that of model generated trajectories $q(\bm{x})$ by evaluating the Kullback-Leibler (KL) divergence:
\begin{align}\label{eq:D_stsp}
    D_{\textrm{stsp}} := D_{\textrm{KL}}\left(p(\bm{x}) \ || \ q(\bm{x})\right) = \int_{\bm{x} \in \mathbb{R}^N} p(\bm{x}) \log\frac{p(\bm{x})}{q(\bm{x})} d\bm{x}.
\end{align} 
In practice, we estimate $p(\bm{x})$ and $q(\bm{x})$ by placing Gaussian Mixture Models (GMM) along orbits \citep{koppe_identifying_2019, brenner_tractable_2022}), i.e.
\begin{equation}
    \begin{aligned}
        \hat{p}(\bm{x}) &= \frac{1}{T_1}\sum_{t=1}^T \mathcal{N}(\bm{x};\ \bm{x}_t, \bm{\Sigma}) \\
        \hat{q}(\bm{x}) &= \frac{1}{T_2}\sum_{t=1}^{T_2} \mathcal{N}(\bm{x}; \ \hat{\bm{x}}_t, \bm{\Sigma}),
    \end{aligned}
\end{equation}
where $\bm{x}_{1:T_1}$ is the ground-truth data of length $T_1$ and $\hat{\bm{x}}_{1:T_2}$ is a model-generated trajectory of length $T_2$, $\mathcal{N}(\bm{x}; \ \bm{x}_t, \bm{\Sigma})$ is a multivariate Gaussian with mean vector $\bm{x}_t$ and covariance matrix $\bm{\Sigma} = \mathrm{diag}([\sigma^2_1, \dots, \sigma^2_N])$. The KL-divergence between two GMMs can be approximated using a Monte-Carlo approach \citep{Hershey2007ApproximatingTK}
\begin{align}
D_{\textrm{stsp}} = D_{\textrm{KL}}\left(\hat{p}(\bm{x}) \ || \ \hat{q}(\bm{x})\right) \approx \frac{1}{n} \sum_{i=1}^n \log \frac{\hat{p}(\bm{x}^{(i)})}{\hat{q}(\bm{x}^{(i)})},
\end{align}
with $n$ Monte Carlo samples $\bm{x}^{(i)}$ randomly drawn from the GMM based on observed orbits, $\hat{p}(\bm{x}^{(i)})$. When the observed system is only partially observed and the true dynamics unfold in $d > N$ dimensions, the attractor first has to be unfolded to apply $D_{\mathrm{stsp}}$ correctly \citep{botvinick2025invariant}. To this end we first apply a delay embedding (DE) \citep{takens_detecting_1981, sauer_embedology_1991} of the observed and model generated orbits
\begin{equation}
    \bm{\xi}_t = \left[ \bm{x}_t, \bm{x}_{t-\tau}, \bm{x}_{t-2\tau}, \dots, \bm{x}_{t-(m-1)\tau} \right]^\top  \in \mathbb{R}^{mN}
\end{equation}
with delay $\tau$ and embedding dimension $m$, where the same construction is performed for $\hat{\bm{\xi}}_t$. We then fit GMMs and calculate $D_{\mathrm{stsp}}$ in embedding space, i.e.
\begin{equation}
    D^{\mathrm{DE}}_{\mathrm{stsp}} = D_{\textrm{KL}}\left(\hat{p}(\bm{\xi}) \ || \ \hat{q}(\bm{\xi})\right) \approx \frac{1}{n} \sum_{i=1}^n \log \frac{\hat{p}(\bm{\xi}^{(i)})}{\hat{q}(\bm{\xi}^{(i)})}.
\end{equation}

For all experiments, we use $n=10^6$ and set the bandwidth $\bm{\Sigma}$ of Gaussian compartments using $\bm{\Sigma} = f_{bw} \cdot \mathrm{diag}([\hat{\sigma}_1^2, \dots , \hat{\sigma}_d^2])$, where $f_{bw} = \left(\frac{T_1(d+2)}{4}\right)^{-1/(d+4)}$ is a bandwith factor chosen according to Silverman's rule-of-thumb \citep{silverman2018density} and $\hat{\sigma}_i$ is the empirical standard deviation of $x_{i, 1:T_1}$. Furthermore, to probe long-term consistency beyond the available data, we choose to generate model orbits for $T_2 = 3 \cdot T_1$. This avoids low $D_{\mathrm{stsp}}$ when the model is only \textit{transiently} generating the correct long-term behavior. 

Since the forced Lorenz-96 system (Appx. \ref{appx:datasets}) and bursting neuron data sets are only partially observed, we used settings $m=3, \tau=4{,}096$ for the Lorenz-96 and $m=7, \tau=1{,}024$ for the bursting neuron. We explicitly used a large delays to resolve the long time scales present in the data. For the partially observed Lorenz-63 (see Sect. \ref{subsec:gtf_deer_ablations}), we used $m=3, \tau=10$.

\paragraph{Short-term RMSE}
The $n$-step-ahead RMSE assesses short- to medium-length forecasts and is defined as
\begin{equation}
    \text{RMSE}(n) = \sqrt{\frac{1}{n} \sum_{k=1}^{n} \left\lVert\bm{x}_k - G_{\bm{\psi}}(F_{\bm{\theta}}^{\circ k}(\bm{z}_0))\right\rVert^2}
\end{equation}
where $\bm{x}_{1:n} := \bm{x}_{t_i:t_i+n-1}$ is a window starting at index $t_i$ of the total ground-truth data $\bm{x}_{1:T_{\mathrm{tot}}}$. The initial condition $\bm{z}_0$ is obtained by performing a warm-up using a history of length $T_w$, $\bm{x}_{t_i-T_w:t_i-1}$ (see Sect.~\ref{subsec:longterm_dependencies} for details on warm-up). In practice we used averaged the RMSE over $100$ windows from different initial conditions.

\section{Additional experimental details}\label{appx:experimental_details}
This appendix section lists specific experimental details of experiments performed in the main paper. All experiments were performed on a compute server with 2x96-Core AMD EPYC 9655 CPUs, 768 GB RAM, and 6x NVIDIA RTX Pro 6000 Blackwell (96 GB) GPUs; however, individual trainings were always performed on a single GPU at a time. 

\subsection{Figure \ref{fig:1}}\label{appx:details_fig1}
\paragraph{Fig. 1\textbf{A}} We trained shPLRNNs (Eq.~\eqref{eq:shplrnn_nonlinear}, $L=50$) of varying latent dimensions $M\in \{4, 16, 64, 128\}$ on the Lorenz-63 system using GTF-DEER. The batch size was fixed at $B=1$, while the sequence length was varied in powers of 2, i.e. $T \in \{128, 256, \, \dots, \ 32{,}768\}$ and we used no warm-up for this experiment ($T_w=0$). We measured the time of combined forward+backward passes throughout training. Training was performed for $10,000$ updates (samples). We choose to explicitly train a model to account for the fact that GTF-DEER's convergence depends on the dynamics of the model, giving us an estimate of the spread of runtimes throughout training. Indeed, for the chosen $\alpha = 0.15$, GTF-DEER's convergence stayed consistent throughout training (see also Fig. \ref{fig:qdeer_ablation}\textbf{B}). The sequential baseline runtime is independent of explicit dynamics, and hence we only measured the time of trained models using $N_s=500$ samples. In code, the only difference between the sequential and parallel implementation is the call to the forward pass solver, which is given by a straight-forward \texttt{jax.lax.scan} for the sequential case, and the \texttt{seq1d} (DEER solver) function for parallel one, respectively. Both methods parallelize over the batch size using \texttt{jax.vmap}. All data was gathered on an NVIDIA RTX 6000 Blackwell (96GB) GPU.

\paragraph{Fig. 1\textbf{B}} For this figure we used a shPLRNN (varying $M$, fixed $L=50$) that was trained on the full Lorenz-63 system ($N=3$). We performed forward passes for different values of forcing strength $\alpha \in [0, 1]$. The maximum number of Newton iterations was capped at $n_{\textrm{iter}}=500$: In practice, GTF-DEER loses its competitive performance over sequential evaluation even for moderate Newton iterations such that $500$ is already a generous upper bound of what can be considered a useful regime. The different curves correspond to different initial guess strategies, where `Zero' corresponds to initializing $\bm{z}^{(0)}_{1:T} = \bm{0}_{T \times M}$. `$\mathcal{N}(0,1)$' draws entries from a standard Normal distribution, $z_{i, t}^{(0)}\sim \mathcal{N}(0,1)$ and `P-inv' uses $\bm{z}^{(0)}_{1:T} = \bm{B}^+\bm{x}_{1:T}$ to initialize the Newton iterations.

\subsection{Figure \ref{fig:qdeer_ablation}}
For this experiment we trained shPLRNN ($M=5$, $L=50$) on the Lorenz-63 under different conditions. We trained on the full Lorenz-63 system ($N=3$) as well as on the partially observed one ($N=1$, $x$-component). Furthermore, we switched between GTF-DEER with full Jacobians (Eq.~\eqref{eq:GTF_DEER_update}) and diagonalized ones (quasi-DEER, $\bm{J}_{F} \rightarrow \operatorname{diag}(\bm{J}_{F})$). We trained $20$ models per setting for $20{,}000$ parameter updates. The forcing strength was $\alpha=0.15$.

\subsection{Figure \ref{fig:fig3}\textbf{A}}
For Fig. $\ref{fig:fig3}\textbf{A}$ we trained shPLRNNs with $M=10$ and $L=128$ on the forced Lorenz-96 system and $M=6$ for the bursting neuron model (see Appx. \ref{appx:datasets}). To probe the effect of increasing sequence length on this problem, we trained the models on sequence lengths $T\in\{256, 512, 1{,}024, 2{,}048, 4{,}096, 8{,}192, 16{,}384, 32{,}768\}$ where for each setting we used $T_w = T/2$. To keep data parsed by the model per parameter update the same, we adjusted batch size accordingly, such that the product $B\cdot T=2^{15} =32{,}768$ stayed fixed. This resulted in settings $(B, T) = (256, 128)$ and $(B, T) = (1, 32{,}768)$ for the lowest and highest sequence length used, respectively. For each data point in the figure, we trained $10$ independent models. For the Lorenz-96 we used a forcing strength of $\alpha=0.08$, for the bursting neuron model $\alpha=0.4$.

\subsection{Figure \ref{fig:fig3}\textbf{B}}\label{app:linear_vs_nonlinear}
\subsubsection{Mamba-2}
As a state-of-the-art SSM baseline, we compare the LSSM
(Eq.~\eqref{eq:shplrnn_ssm}) and the GTF-DEER-trained
shPLRNN (Eq.~\eqref{eq:shplrnn_nonlinear}) to a model in
which the linear recurrence of the LSSM is replaced by a single Mamba-2 block
\citep{dao2024transformers}, while keeping the rest of the architecture, in
particular the one-hidden-layer MLP read-out from Eq.~\eqref{eq:shplrnn_ssm}, identical. We refer to this configuration as Mamba-2-$n_p$, where $n_p$ is the number of parameters. For the
non-recurrence machinery of the Mamba-2 block (input projections, depth-wise
causal convolution, gating branch, multi-head setup, output projection) we refer the reader to the original work by \citep{dao2024transformers}. In the following, we clarify the, to us important distinction to the LSSM introduced in Eq.~\eqref{eq:shplrnn_ssm}.

\paragraph{Selective SSM recurrence.} The main \emph{functional} difference of
the Mamba recurrence is the introduction of a \emph{selection mechanism}
\citep{gu2024mamba, dao2024transformers}, which renders the model
non-autonomous even in the absence of external inputs $\bm{s}_t$. Dropping
$\bm{s}_t$ for brevity and using the LSSM (Eq.~\eqref{eq:shplrnn_ssm}) as a
template, the core recurrence inside the Mamba-2 block can be written as
\begin{equation}
    \bm{z}_t = \bar{\bm{A}}_t(\bm{x}_{t-1}) \, \bm{z}_{t-1} + \bar{\bm{U}}_t(\bm{x}_{t-1}) \, \bm{x}_{t-1}
    \label{eq:mamba_recurrence}
\end{equation}
i.e., \emph{structurally} identical to the LSSM but with $\bar{\bm{A}}_t$ and
$\bar{\bm{U}}_t$ now depending on the input $\bm{x}_{t-1}$. Specifically, parameterizations of 
$\bar{\bm{A}}_t$ and $\bar{\bm{U}}_t$ are given by
\begin{equation}
    \begin{aligned}\label{eq:mamba_discretization}
    \Delta_t &= \mathrm{softplus}(\bm{W}_\Delta \bm{x}_{t-1} + \bm{b}_\Delta) \in \mathbb{R} \\
    \bm{U}_t &= \bm{W}_U \bm{x}_{t-1} \\
    \bar{\bm{A}}_t &= \exp(\Delta_t \, \bm{A}) \\
    \bar{\bm{U}}_t &= \Delta_t \, \bm{U}_t,
    \end{aligned}
\end{equation}
where $\bm{A}$ is a \emph{fixed}, learnable parameter which for Mamba-2 is restricted to
$\bm{A} = a\,\bm{I}$ for a single learned scalar $a < 0$ per head. The recurrence
parameters thus become functions of the input through the data-dependent scalar
$\Delta_t$ and the data-dependent input vector $\bm{U}_t$, while $\bm{A}$ itself
remains time-invariant. Setting $\Delta_t = \Delta$ and
$\bm{U}_t = \bm{U}$ recovers a fixed-parameter linear recurrence equivalent
to the LSSM (Eq.~\eqref{eq:shplrnn_ssm}).\footnote{
For clarity, Eqs.~\eqref{eq:mamba_recurrence} \& \eqref{eq:mamba_discretization}
describe the Mamba-2 recurrence at the level of abstraction relevant for
comparison with the LSSM. In practice, a Mamba-2 block does not operate
directly on $\bm{x}_{t-1}$: the input is first lifted to an expanded
representation of dimension $D = E \cdot N$ via a linear projection, where $E$ is the expansion factor, then split
into $H$ heads, and each head runs an independent SSM with its own
data-dependent parameters $(\Delta_t, \bm{U}_t)$ shared across the
channels within that head, while $\bm{A}$ is a learned scalar per head.
Channel mixing across the observation dimensions therefore occurs in the
surrounding input/output projections rather than inside the recurrence
itself. We refer the reader to \citep{dao2024transformers} for the full
block specification.}

\paragraph{Relation to the LSSM} Crucially, the recurrence in
Eq.~\eqref{eq:mamba_recurrence} is still \emph{linear in the latent state}
$\bm{z}_{t-1}$ for any fixed value of $\bm{x}_{t-1}$, which is what permits
parallelization via parallel scan during training \citep{dao2024transformers}.
At the same time, the selection mechanism circumvents the
$\mathrm{rank}\le\min(N, M, L)$ bottleneck on the effective test-time recurrence
(Eq.~\eqref{eq:shplrnn_ssm_test_time}) discussed in Sect.~\ref{subsec:autoregressive_dsr}:
because $\bm{U}_t$ varies with $t$, the model is no longer constrained to a
single fixed product $\overline{\bm{W}} = \bm{U}\bm{B}$ of fixed rank, but can in principle
realize a different effective input pathway at every step. 

\subsubsection{Hyperparameters}
\paragraph{Optimization settings} For experiments in Sect.~\ref{subsec:recurrence_comparison}, all models were trained using the same following settings: batch size $B=1$, sequence length $T=81{,}920$ with warm-up $T_w=16{,}384$ and a learning rate decay from $\eta_s=5\cdot10^{-5}$ to $10^{-6}$ using a cosine decay schedule~\citep{loshchilov2017sgdr}. Training was performed for $150{,}000$ parameter updates.

\paragraph{Mamba-2} For the Mamba-2 models, we evaluated two parameter budgets, denoted Mamba-2-7k with $7{,}038$ and Mamba-2-17k with $16{,}830$ parameters, respectively. The former is matched as closely as possible to the shPLRNN/LSSM budget ($\approx 2{,}200$--$2{,}900$ parameters) 
for direct comparison at fixed capacity without overly limiting Mamba-2's expressiveness; the latter probes whether additional
capacity closes the gap. Both use a single Mamba-2 block with hidden dimension
$D = 16$ (\texttt{d\_model}) for Mamba-2-7k and $D=32$ for Mamba-2-17k. All other parameters stay the same: State size $N = 16$ (\texttt{d\_state}), $\texttt{d\_conv}=4$ and $\texttt{expand}=2$. Following the Mamba-2 block is the same MLP read-out as used in the LSSM (Eq.~\eqref{eq:shplrnn_ssm}) with $L=128$. We used the Mamba-2 implementation from the original code repository \url{https://github.com/state-spaces/mamba}, licensed under the Apache License, Version 2.0.
\paragraph{RNN}
The RNN used $M=10$ and $L=128$ and $M_r=4$. We used $\lambda_{\mathrm{MAR}}=1$, $\lambda_1 = 1$ and $\lambda_2=10^{-4}$, however, we found the regularization of the singular values of $\bm{B}$ did not have a huge impact on training convergence and reconstruction performance. The forcing was $\alpha=0.08$.

\paragraph{LSSM}
For the LSSM we kept core settings equal to the ones used in the RNN, i.e. $M=10$, $L=128$. Additionally, we apply MAR to the LSSM (see Appx. \ref{appx:regularizations}) where we scanned $\lambda_{\mathrm{MAR}} \in \{10^{-4}, 10^{-3}, 10^{-2}, 10^{-1}, 1, 10\}$ but found that while $\lambda_{\mathrm{MAR}} = 10$ degraded performance, all other settings produced similar results. Hence we reported $\lambda_{\mathrm{MAR}}=1$ to make clear that MAR does not improve reconstruction for the LSSM. For the latter settings we also used $M_r=4$, mirroring the setting of the RNN. 

\section{Additional figures}
\begin{figure}[htb!]
    \centering
    \includegraphics[width=1.0\linewidth]{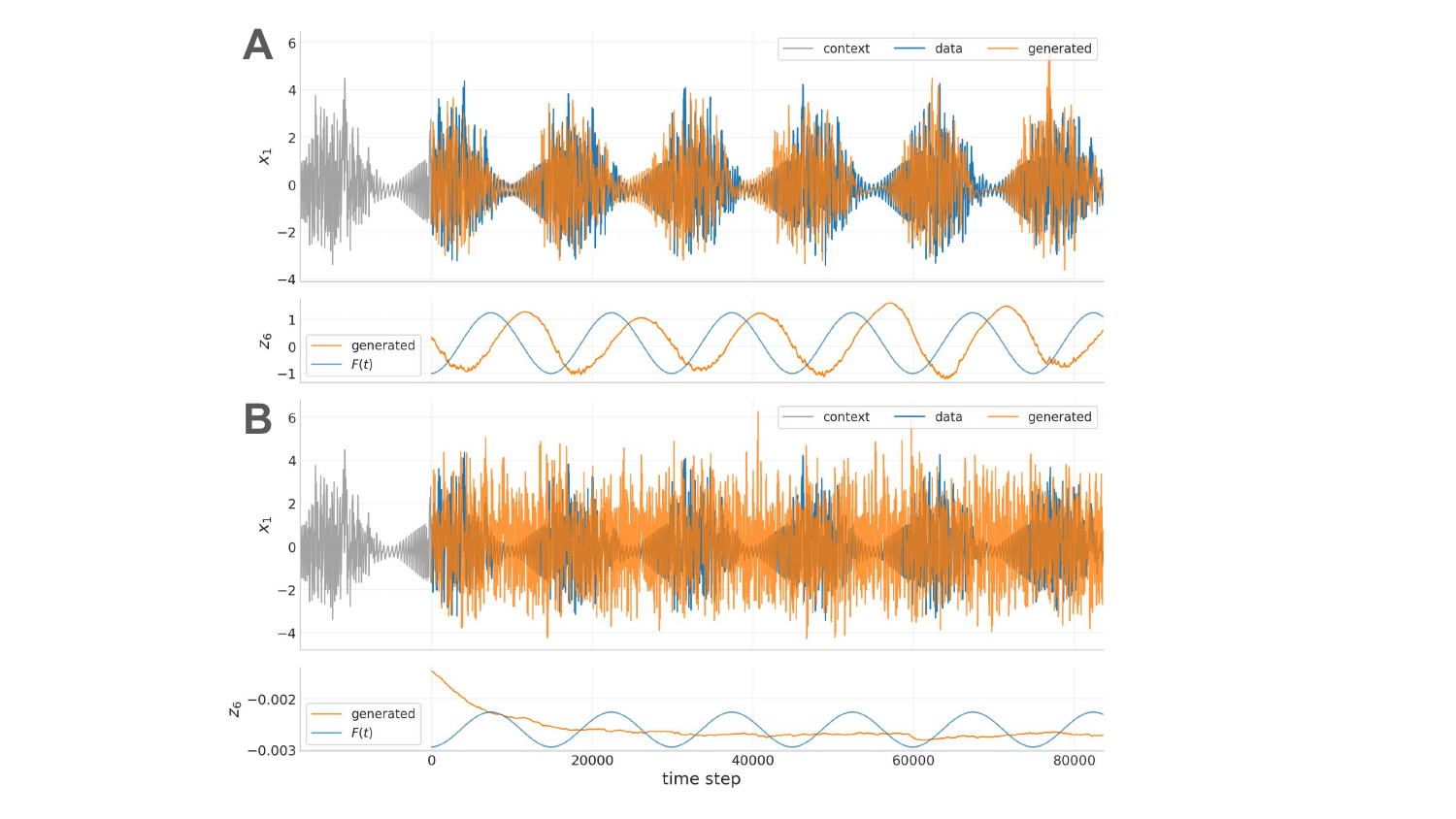}
    \caption{Generated dynamics of a shPLRNN ($M=10, L=128$) trained on the forced Lorenz-96 system. \textbf{A}: Access to long sequences during training enables the model to capture the latent long-period sinusoidal forcing $F(t)$ in the MAR regularized units, and hence producing accurate long-term rollouts. \textbf{B}: A model trained on short sequence lengths w.r.t. the intrinsic time scales of the data fails to capture and reproduce the latent forcing during autoregressive roll-outs.}
    \label{fig:lor96_example}
\end{figure}

\begin{figure}[htb!]
    \centering
    \includegraphics[width=1.0\linewidth]{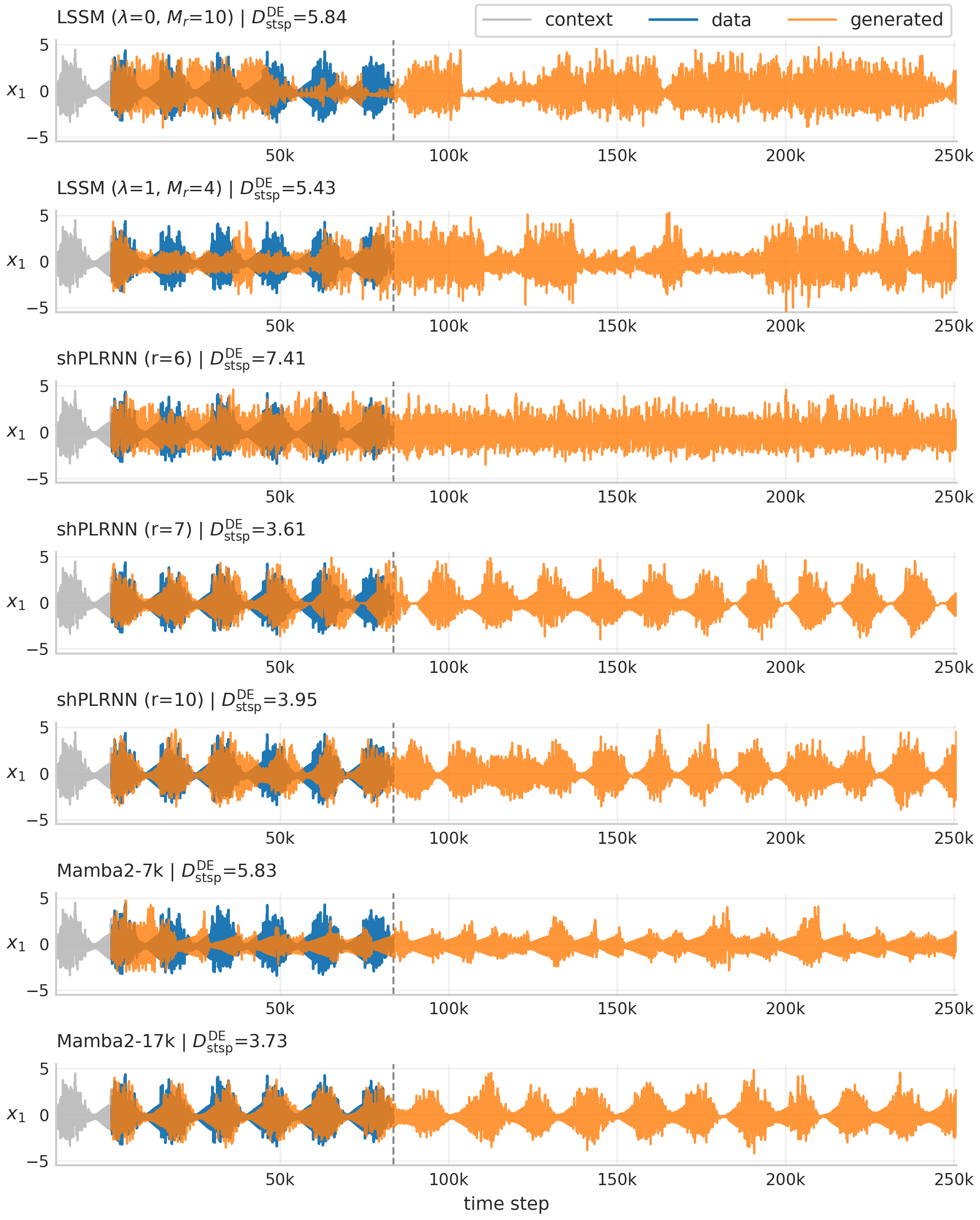}
    \caption{Generated dynamics of $3\times$ the length of available ground truth data from models evaluated in Sect. \ref{subsec:recurrence_comparison}. LSSMs and shPLRNNs with their rank limited to the number of observed variables $r=N=6$ fail to learn the latent sinusoidal forcing. Increasing the rank to $7$ in the shPLRNN enables the model to learn the underlying forcing and produce convincing limiting behavior, however, the model still struggles to infer the correct phase from the context. Even though the Mamba-2 based model can circumvent the low-rank constraint due to its data-dependent recurrence weights, it does not match the performance of the shPLRNNs trained with GTF-DEER. Each generated trace was produced using the best model (out of $10$) per configuration.}
    \label{fig:model_comparison_traces}
\end{figure}
\begin{figure}[htb!]
    \centering
    \includegraphics[width=1.0\linewidth]{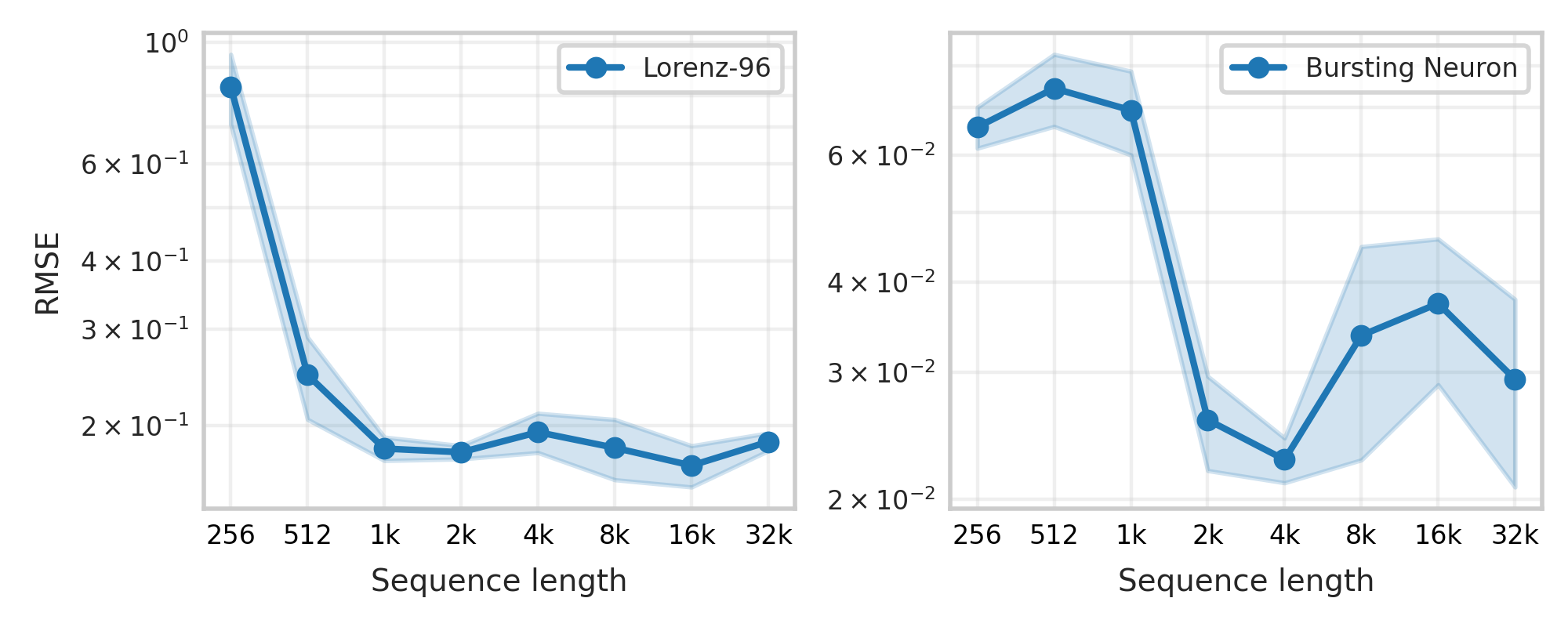}
    \caption{$\textrm{RMSE}(128)$ as a function of sequence length for shPLRNNs trained on the forced Lorenz-96 and bursting neuron systems, complementary to the $D_\mathrm{stsp}$ plot in Fig.~\ref{fig:fig3}. Median $\pm$ MAD, $10$ runs per setting.}
    \label{fig:rmse_fig3}
\end{figure}

\FloatBarrier
\section{Training algorithms}
\begin{algorithm}[H]
\caption{Linear SSM training (using parallel associative scan)}
\label{alg:LSSM}
\begin{algorithmic}[1]
\linespread{1.4}\selectfont
\Require Data $\bm{X} \in \mathbb{R}^{T_\mathrm{obs}\times N}$, optional external inputs $\bm{S} \in \mathbb{R}^{T_\mathrm{obs}\times K}$, linear recurrence $F_{\bm{\theta}}(\bm{z}_{t-1}, \bm{x}_{t-1}, \bm{s}_t)$, decoder $G_{\bm{\psi}}(\bm{z}_t)$, batch size $B$, sequence length $T$, warm-up length $T_w$
\Ensure Trained parameters $\bm{\theta}, \bm{\psi}$
\Repeat
    \State Sample sequences $\bm{x}_{0:T} \in \mathbb{R}^{B\times (T+1)\times N}$ from $\bm{X}$, $\bm{s}_{1:T} \in \mathbb{R}^{B\times T\times K}$ from $\bm{S}$ \LComment{Implicit vectorized batch processing below, e.g.\ via \texttt{jax.vmap}}
    \State Initialize $\bm{z}_0 = \bm{0}$
    \State $\bm{z}_{1:T} \gets \mathrm{associative\_scan}(F_{\bm{\theta}}, \bm{z}_0, \bm{x}_{0:T-1}, \bm{s}_{1:T})$ \Comment{Solve in parallel on GPU}
    \State $\hat{\bm{x}}_{1:T} = G_{\bm{\psi}}(\bm{z}_{1:T})$
    \State $\mathcal{L} \gets \mathrm{MSE}(\bm{x}_{T_w+1:T}, \hat{\bm{x}}_{T_w+1:T})$ \Comment{Exclude warm-up steps from loss}
    \State $(\bm{g}_{\bm{\theta}}, \bm{g}_{\bm{\psi}}) \gets \mathrm{grad}(\mathcal{L}; \ \bm{\theta}, \bm{\psi}$)
    \State $(\bm{\theta}, \bm{\psi}) \gets \mathrm{optimize(\bm{\theta}, \bm{\psi}, \bm{g}_{\bm{\theta}}, \bm{g}_{\bm{\psi}}})$
\Until{convergence}
\end{algorithmic}
\end{algorithm}

\begin{algorithm}[H]
\caption{Initial value / trajectory matching training with generalized teacher forcing}
\label{alg:ivp}
\begin{algorithmic}[1]
\linespread{1.4}\selectfont
\Require Data $\bm{X} \in \mathbb{R}^{T_\mathrm{obs}\times N}$, optional external inputs $\bm{S} \in \mathbb{R}^{T_\mathrm{obs}\times K}$, recurrent model $F_{\bm{\theta}}$, observation matrix $\bm{B} \in \mathbb{R}^{N \times M}$, forcing strength $\alpha \in [0, 1]$, batch size $B$, sequence length $T$
\Ensure Trained parameters $\bm{\theta}, \bm{B}$
\Repeat
    \State Compute $\bm{B}^+ \gets \mathrm{pinv}(\bm{B})$, \quad $\bm{P} \gets \bm{I}_M - \alpha \bm{B}^+ \bm{B}$
    \State Sample sequences $\bm{x}_{0:T} \in \mathbb{R}^{B\times (T+1)\times N}$ from $\bm{X}$, $\bm{s}_{1:T} \in \mathbb{R}^{B\times T\times K}$ from $\bm{S}$ \LComment{Implicit vectorized batch processing below, e.g.\ via \texttt{jax.vmap}}
    \State Compute teacher signals $\bar{\bm{z}}_{0:T} = \bm{B}^+(x_{0:T})$
    \State Initialize $z_0 \gets \bar{z}_0$
    \For{$t = 1, \dots, T$}
        \State $\tilde{\bm{z}}_{t-1} \gets \bm{P}\, \bm{z}_{t-1} + \alpha\, \bar{\bm{z}}_{t-1}$ \Comment{Build weighted forced state}
        \State $\bm{z}_t \gets F_{\bm{\theta}}(\tilde{\bm{z}}_{t-1}, \bm{s}_{t})$
        \State $\hat{\bm{x}}_t \gets \bm{B}\bm{z}_t$
    \EndFor
    \State $\mathcal{L} \gets \mathrm{MSE}(\bm{x}_{1:T}, \hat{\bm{x}}_{1:T})$
    \State $(\bm{g}_{\bm{\theta}}, \bm{g}_{\bm{B}}) \gets \mathrm{grad}(\mathcal{L}; \ \bm{\theta}, \bm{B})$ \Comment{Backpropagation through time}
    \State $(\bm{\theta}, \bm{B}) \gets \mathrm{optimize}(\bm{\theta}, \bm{B}, \bm{g}_{\bm{\theta}}, \bm{g}_{\bm{B}})$
\Until{convergence}
\end{algorithmic}
\end{algorithm}

\begin{algorithm}[H]
\caption{GTF-DEER: Generalized teacher forcing with DEER parallel solver}
\label{alg:gtf-deer}
\begin{algorithmic}[1]
\linespread{1.4}\selectfont
\Require Data $\bm{X} \in \mathbb{R}^{T_{\mathrm{obs}}\times N}$, optional external inputs $\bm{S} \in \mathbb{R}^{T_\mathrm{obs}\times K}$, recurrent model $F_{\bm{\theta}}$, observation matrix $\bm{B} \in \mathbb{R}^{N \times M}$, forcing strength $\alpha \in [0, 1]$, batch size $B$, sequence length $T$, tolerance $\varepsilon$
\Ensure Trained parameters $\bm{\theta}$, $\bm{B}$
\Repeat
    \State Compute $\bm{B}^+ \gets \mathrm{pinv}(\bm{B})$, \quad $\bm{P}_\alpha \gets \bm{I}_M - \alpha \bm{B}^+ \bm{B}$, \quad $\bm{P}_1 \gets \bm{I}_M - \bm{B}^+ \bm{B}$
    \State Sample sequences $\bm{x}_{0:T} \in \mathbb{R}^{B\times (T+1)\times N}$ from $\bm{X}$, $\bm{s}_{1:T} \in \mathbb{R}^{B\times T\times K}$ from $\bm{S}$ \LComment{Implicit vectorized batch processing below, e.g.\ via \texttt{jax.vmap}}
    \State Compute forcing targets $\bar{\bm{z}}_t \gets \bm{B}^+ \bm{x}_t$ for $t = 0, \ldots, T$
    \State Initialize $\bm{z}_0 \gets \bar{\bm{z}}_0$ \Comment{Below, fix $\bm{z}^{(k)}_0 = \bm{z}_0 \;\forall k$}
    \State Initial guess $\bm{z}^{(0)}_{1:T} \gets \bar{\bm{z}}_{1:T}$ \Comment{Warm start from data}
    \For{$k = 0, 1, \ldots$ \textbf{until} $\|\Delta\bm{z}_{1:T}^{(k+1)}\|_\infty < \varepsilon$} \Comment{DEER iteration}
        \State $\tilde{\bm{z}}_t^{(k)} \gets \bm{P}_1\bm{z}_{t}^{(k)}+\bar{\bm{z}}_t$ for $t = 1, \ldots, T_w$ \Comment{Warm-up}
        \State $\tilde{\bm{z}}_t^{(k)} \gets \bm{P}_\alpha \bm{z}^{(k)}_{t} + \alpha\bar{\bm{z}}_t$ for $t = T_w + 1, \ldots, T$ \Comment{Forced inputs}
        \State $\bm{J}_t^{(k)} \gets \frac{\partial F_\theta}{\partial \bm{z}}\big|_{\tilde{\bm{z}}_t^{(k)}} \bm{P}_1$ for $t = 1, \ldots, T_w$ \Comment{Warm-up Jacobians}
        \State $\bm{J}_t^{(k)} \gets \frac{\partial F_\theta}{\partial \bm{z}}\big|_{\tilde{\bm{z}}_t^{(k)}} \bm{P}_\alpha$ for $t = T_w + 1, \ldots, T$ \Comment{Forced Jacobians}
        \State $\bm{r}_t^{(k)} \gets \bm{z}^{(k)}_t - F_{\bm{\theta}}(\tilde{\bm{z}}_{t-1}^{(k)})$ for $t = 1, \ldots, T$ \Comment{Residuals}
        \State Solve $\Delta\bm{z}_t^{(k+1)} = \bm{J}_{t-1}^{(k)} \Delta\bm{z}_{t-1}^{(k+1)} - \bm{r}_t^{(k)}$ for $t = 1, \ldots, T$ via associative scan
        \State $\bm{z}^{(k+1)}_{1:T} \gets \bm{z}^{(k)}_{1:T} + \Delta\bm{z}^{(k+1)}_{1:T}$
    \EndFor
    \State $\hat{\bm{x}}_{T_w+1:T} \gets \bm{B} \bm{z}_{T_w+1:T}^{(\mathrm{final})}$
    \State $\mathcal{L} \gets \mathrm{MSE}(\bm{x}_{T_w+1:T}, \hat{\bm{x}}_{T_w+1:T})$ \Comment{Loss excluding warm-up states}
    \State $(\bm{g}_{\bm{\theta}}, \bm{g}_{\bm{B}}) \gets \mathrm{grad}(\mathcal{L}; \ \bm{\theta}, \bm{B})$ \Comment{Single associative scan via IFT}
    \State $(\bm{\theta}, \bm{B}) \gets \mathrm{optimize}(\bm{\theta}, \bm{B}, \bm{g}_{\bm{\theta}}, \bm{g}_{\bm{B}})$
\Until{convergence}
\end{algorithmic}
\end{algorithm}

\end{document}